%% file: main.tex
\documentclass{article}
\usepackage{flowcred_preprint,times}
\usepackage{amsmath,amssymb,mathtools,amsthm}
\usepackage{booktabs,multirow,makecell,tabularx,array}
\usepackage{graphicx,subcaption}
\graphicspath{{Figures/}}
\usepackage{xcolor,colortbl}
\usepackage{algorithm}
\usepackage{algpseudocode}
\usepackage{microtype}
\usepackage{enumitem}
\usepackage{siunitx}
\usepackage{placeins}
\usepackage{hyperref}
\usepackage{url}
\hypersetup{colorlinks=true,linkcolor=black,citecolor=black,urlcolor=black,pdftitle={FlowCred: Hierarchical Credit Assignment and Functional Coordination for Joint Text-to-Video Post-Training},pdfauthor={Mingxuan Zhu, Yanwei Huang, Shiyuan Liu, Xinzhi Zhang, Shujie Li, Yuanxing Zhang, Lu Chen}}
\definecolor{softgray}{gray}{0.970}
\definecolor{rowlight}{gray}{0.955}
\definecolor{rowmid}{gray}{0.920}
\definecolor{rowdark}{gray}{0.875}
\definecolor{groupgray}{gray}{0.945}
\definecolor{groupbg}{RGB}{241,244,247}
\definecolor{grouptext}{RGB}{66,78,91}
\newcommand{\method}{\textsc{FlowCred}}
\newcommand{\E}{\mathbb{E}}
\newcommand{\sg}[1]{\operatorname{sg}\!\left[#1\right]}
\newcommand{\best}[1]{\textbf{#1}}
\newcommand{\second}[1]{\underline{#1}}
\newcommand{\msd}[2]{#1\,{\scriptsize(#2)}}
\newcommand{\TtwoVC}{T2V-CompBench}
\newcommand{\VBclassic}{VBench}
\newcommand{\VBtwo}{VBench-2.0}
\newtheorem{proposition}{Proposition}

\title{FlowCred: Hierarchical Credit Assignment and Functional Coordination for Joint Text-to-Video Post-Training}
\author{%
\parbox{0.97\textwidth}{\centering
Mingxuan Zhu$^{1,2}$\thanks{This work was conducted during the author's internship at KlingAI.}\quad Yanwei Huang$^{2,3}$\quad Shiyuan Liu$^{2}$\\[3pt]
Xinzhi Zhang$^{2,4}$\quad Shujie Li$^{2,5}$\quad Yuanxing Zhang$^{2}$\quad Lu Chen$^{1}$\\[9pt]
{\small $^{1}$Shanghai Jiao Tong University \quad $^{2}$KlingAI\\[2pt]
$^{3}$The Hong Kong University of Science and Technology\\[2pt]
$^{4}$South China University of Technology \quad $^{5}$The University of Hong Kong}\\[5pt]
}}

\begin{document}
\raggedbottom
\maketitle

\begin{abstract}
Prompt optimization and generator post-training are two complementary ways to improve text-to-video (T2V) generation, and recent systems increasingly train both modules jointly. However, optimizing a prompt policy and a video generator from the same terminal reward creates two ambiguities: prompt quality is confounded with stochastic generation, and the two modules may realize overlapping local improvements. We introduce \method{}, a two-level framework for joint T2V post-training. First, hierarchical credit assignment uses matched prompt--seed rollout grids to separate prompt-level credit from prompt-conditioned generator residuals under shared randomness. Second, functional coordination probes alternative current-policy prompts at states from actual generation trajectories, providing generator-aware evidence for prompt optimization while selectively attenuating generator updates already supported by conditioning changes. Across multiple T2V benchmarks, \method{} consistently improves matched joint-training baselines. On LTX-2.5, \method{} improves a strictly matched joint-training control from 61.47 to 64.46 on T2V-CompBench and from 55.58 to 58.08 on StoryEval, without increasing decoded-video or reward-model-call budgets. Factorial and realized-update analyses show complementary contributions from prompt shaping and generator routing, while human evaluation and transfer across backbones and updater families provide further supporting evidence.
\end{abstract}

\section{Introduction}
Text-to-video (T2V) systems increasingly expose two trainable adaptation channels: a language-side prompt policy and the video generator itself. Prompt-side methods improve conditioning while leaving the visual model fixed~\citep{hao2023promptist,cao2023beautifulprompt,ji2025promptavideo,cheng2025vpo}; generator post-training instead changes the diffusion/flow model using reward or preference feedback~\citep{black2023ddpo,fan2023dpok,wallace2024diffusiondpo,liu2025flowgrpo,xue2025dancegrpo,xue2025awm}. Recent systems optimize both modules jointly~\citep{wang2026promptrl,cheng2026vans,liu2026unigrpo,jia2026cape}. Prompt changes can reorganize entities, relations, and event descriptions before generation, whereas generator updates address residual motion, appearance, and execution errors that conditioning alone cannot remove. When only one module is updated, a terminal reward can be treated as supervision for that module; once both are trainable, the same scalar signal must implicitly serve both attribution and coordination roles. Joint post-training is therefore attractive; once both channels are trainable, however, one terminal score leaves two distinct questions unresolved.

The first is \emph{statistical credit}. A terminal reward conflates prompt quality with downstream generator randomness: identical prompts can score differently across seeds, while favorable trajectories can mask weaker prompt choices. The reward alone therefore does not identify which variation belongs to prompt choice and which remains after the prompt is fixed. For request $u$, prompt action $p$, stochastic generator trajectory $\mathcal T$, and terminal reward $R$, let $Q(u,p)=\E[R\mid u,p]$ and $V(u)=\E_pQ(u,p)$. Then
\begin{equation}
R-V(u)=\underbrace{Q(u,p)-V(u)}_{\text{prompt decision}}+\underbrace{R-Q(u,p)}_{\text{prompt-fixed trajectory}}.
\label{eq:intro-decomp}
\end{equation}
This exposes two sources of terminal variation. The second question is \emph{functional coordination}: even after statistical credit is separated, changing the prompt at a generator state may induce a response aligned with the same reward-improving direction used by the downstream optimizer. The two modules can therefore duplicate part of the same local improvement. Statistical credit and functional overlap are thus distinct problems.

These two ambiguities require different remedies: variance reduction can make credit estimates cleaner without preventing functional overlap, while coordinating local updates does not by itself determine where terminal variation originated. Treating them separately lets joint post-training distinguish \emph{who should receive credit} from \emph{which module should realize an overlapping improvement}.

To address both, we introduce \method{}, named for its joint treatment of credit assignment and coordination across the prompt-to-video generation flow. Figure~\ref{fig:overview} summarizes the two ambiguities and their corresponding mechanisms. \textbf{Hierarchical credit assignment} uses matched prompt--seed rollouts to separate prompt-level credit from prompt-conditioned generator residuals. \textbf{Functional coordination} probes alternative prompts at actual rollout states and uses same-state response geometry for generator-aware prompt shaping and reliability-gated generator routing.

\begin{figure*}[t]
\centering
\includegraphics[width=0.978\textwidth]{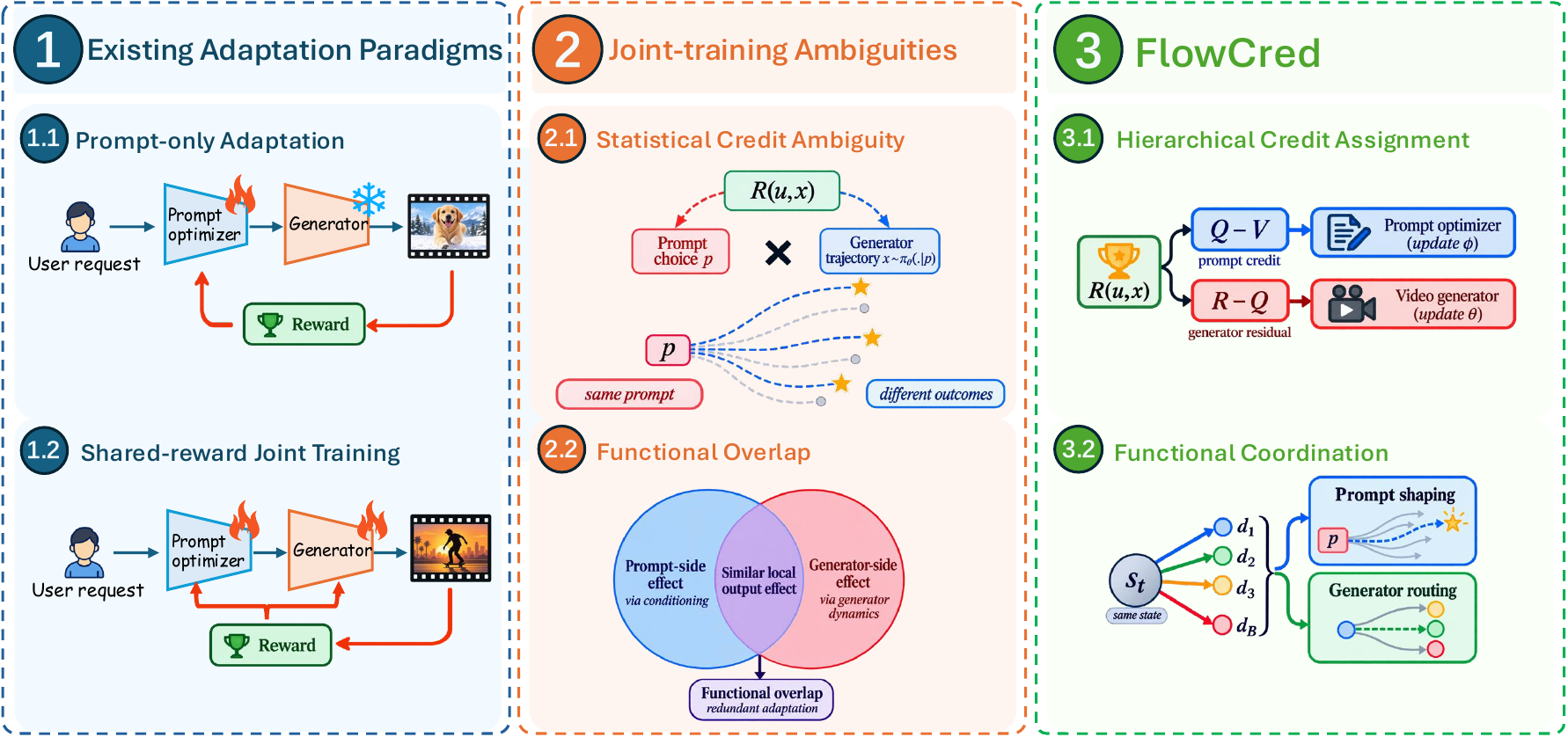}
\caption{\textbf{Motivation and positioning of \method{}.} The left panel contrasts prompt-only adaptation with shared-reward joint training. Once both modules are trained from a common terminal reward, two distinct issues arise: statistical credit ambiguity between prompt choice and prompt-fixed stochastic variation, and functional overlap between conditioning-side and generator-side local effects. \method{} addresses them with hierarchical credit assignment ($Q-V$ versus $R-Q$) and functional coordination based on same-state response geometry.}
\label{fig:overview}
\end{figure*}

The two mechanisms have different mathematical status: Eq.~\ref{eq:intro-decomp} is exact for any per-trajectory scalar score, whereas functional coordination is local and first-order. Its prompt term is auxiliary shaping and its generator rule is a minimum-overlap regularizer, not a conserved transfer of gradient mass. To keep the statistical claim separate from optimization heuristics, all affine calibration used to combine reward credit and functional prompt-credit terms is estimated once on disjoint training-only data and then frozen (Appendix~\ref{app:reward-recipe}).

Our contributions are threefold.\par
\noindent\textbf{(1)} We introduce hierarchical credit assignment with matched prompt--seed rollouts, separating prompt-level credit from prompt-conditioned generator residuals under shared generator randomness. This gives the prompt policy and generator distinct estimators tied to their positions in the stochastic hierarchy.\par

\noindent\textbf{(2)} We develop functional coordination from same-state prompt responses, using local response geometry for generator-aware prompt shaping and reliability-gated routing of the generator reward update. It targets overlap that remains even after statistical credit has been separated.\par

\noindent\textbf{(3)} We evaluate \method{} with controlled joint-training baselines, mechanism audits, repeated runs, human evaluation, and transfer across backbones and updater families. Across four T2V benchmarks, \method{} consistently improves the matched controls without increasing decoded-video or reward-model-call budgets.

\section{Related Work}
\paragraph{Prompt optimization and generator post-training.}
Prompt-side methods optimize conditioning with a fixed generator~\citep{hao2023promptist,cao2023beautifulprompt,ji2025promptavideo,cheng2025vpo}, whereas PromptRL, VANS, UniGRPO, and CAPE-T2V couple language- and generator-side adaptation in different ways~\citep{wang2026promptrl,cheng2026vans,liu2026unigrpo,jia2026cape}. Generator alignment spans policy gradients, preference objectives, and differentiable reward optimization~\citep{black2023ddpo,fan2023dpok,wallace2024diffusiondpo,clark2024draft,li2024diffusionkto,xu2023imagereward,kirstain2023pickapic,liu2025flowgrpo,choi2026rvm,prabhudesai2024videoalign,wang2026diffusiondrf}; recent methods further refine reward shaping, trajectory allocation, and temporal credit~\citep{xue2025dancegrpo,xue2025awm,luo2026gcpo,zhu2026diffusionapo,ye2026shift,pu2026noiseasier,shen2026star,guan2026latentregisters,savani2026stepwise,oba2026dispo}. These directions are complementary to our cross-module question. Modern open T2V systems span diverse backbone designs~\citep{chen2024videocrafter2,yang2025cogvideox,kong2024hunyuanvideo,wan2025wan}; our experiments hold the downstream harness fixed whenever numerical comparisons require matched conditions.

\paragraph{Credit assignment, variance reduction, and functional coordination.}
COMA counterfactual credit holds other agents fixed while marginalizing one action~\citep{foerster2018coma}, while leave-one-out baselines and common random numbers are standard tools for variance reduction~\citep{ahmadian2024rloo,ng2000pegasus}. PCGrad/CAGrad manipulate conflicting parameter gradients across objectives~\citep{yu2020pcgrad,liu2021cagrad}. \method{} instead separates statistical credit across the stochastic prompt-policy/generator hierarchy and measures functional overlap through same-state conditioning responses in the generator's native output space.

\section{Method}
\label{sec:method}
\subsection{Joint post-training: credit and coordination}
Let $u\sim\mathcal D$ be a request, $p\sim\pi_\phi(\cdot\mid u)$ a prompt action, and $\mathcal T\sim\pi_\theta(\cdot\mid p)$ a stochastic generator trajectory ending in video $x$. The shared objective is
\begin{equation}
J(\phi,\theta)=\E_{u,p,\mathcal T}[R(u,x)].
\label{eq:joint-objective}
\end{equation}
All rewards are evaluated against the original request $u$. We use \emph{statistical credit} for how terminal variation is assigned among sampled decisions and \emph{functional coordination} for how locally overlapping conditioning and generator changes are handled. Accordingly, hierarchical credit assigns terminal variation according to where it arises, while functional coordination acts only on local function changes that both trainable modules can express. We refer to $\pi_\phi$ as the \emph{prompt optimizer} or \emph{prompt policy}.

\subsection{Hierarchical credit assignment}
\label{sec:level1}
Define $Q_\theta(u,p)=\E_{\mathcal T\sim\pi_\theta(\cdot\mid p)}[R(u,x)]$ and $V_{\phi,\theta}(u)=\E_{p\sim\pi_\phi(\cdot\mid u)}[Q_\theta(u,p)]$; we suppress parameter subscripts below. Thus $R-V=(Q-V)+(R-Q)$ separates prompt choice from prompt-fixed stochasticity. For each request, sample $K$ candidates $p_{1:K}\sim\pi_\phi(\cdot\mid u)$ and reuse the same $M$ generator-randomness seeds $z_{1:M}$ for every candidate:
\begin{equation}
x_{k,m},\mathcal T_{k,m}=G_\theta(p_k;z_m),\qquad R_{k,m}=R(u,x_{k,m}).
\label{eq:grid}
\end{equation}
The matched rollout grid yields the paired leave-one-out estimators
\begin{equation}
\begin{aligned}
\widehat A^{P,R}_k&=\frac1M\sum_m\left(R_{k,m}-\frac1{K-1}\sum_{j\neq k}R_{j,m}\right),\\
\widehat A^G_{k,m}&=R_{k,m}-\frac1{M-1}\sum_{m'\neq m}R_{k,m'}.
\end{aligned}
\label{eq:hierarchical-estimators}
\end{equation}
The first contrast compares prompt decisions under common generator randomness and estimates $Q-V$; the second compares trajectories after the prompt is fixed and estimates $R-Q$. Positive same-seed reward covariance lowers cross-prompt contrast variance (Appendix~\ref{app:theory}). The prompt optimizer uses $\widehat A^{P,R}$, while the generator reward branch uses $\widehat A^G$. Reward-component calibration is estimated on disjoint training-only data and frozen, so no statistic from the current $K\!\times\!M$ grid enters $R_{k,m}$ (Appendix~\ref{app:reward-recipe}).

\paragraph{Controlled global-LOO baseline.}
Matched Global-LOO uses the identical $K\!\times\!M$ grid but gives both modules one global leave-one-out trajectory advantage (Appendix~\ref{app:naive-relation}). Averaged over the $M$ trajectories for a prompt, this control preserves the same coarse prompt ordering up to positive scale. The substantive statistical change in hierarchical credit is therefore downstream: generator credit becomes the prompt-conditional residual $R-Q$ rather than a global outcome residual.

\subsection{Functional coordination via same-state responses}
\label{sec:level2}
Hierarchical credit separates endpoint variation but does not describe whether prompt conditioning and the generator reward step realize similar local function changes. We measure that overlap at states actually encountered by the generator and reuse the same response geometry for prompt shaping and generator routing. An encountered state $s_t^{k,m}$ contains the noisy latent, timestep, and scheduler quantities required for one transition on trajectory $(k,m)$. Let $\mu_t=\mu_\theta(s_t^{k,m},p_k)$ be the backend-native transition output. For Full, the base updater's reward branch on this trajectory is weighted by $\widehat A^G_{k,m}$ from Eq.~\ref{eq:hierarchical-estimators}; stabilization/reference terms remain in a separate branch:
\begin{equation}
J_{\mathrm{base},t}=J_{\mathrm{reward},t}+J_{\mathrm{reg},t},\qquad
q_t^R=\nabla_{\mu_t}J_{\mathrm{reward},t},\quad g_t^R=F_t^{-1}q_t^R.
\label{eq:reward-dir}
\end{equation}
Here $F_t$ is the backend's local metric (the transition Fisher for Flow-GRPO), $q_t^R$ is the reward covector, and $g_t^R$ is its metric-mapped local ascent direction; we write $\|v\|_{F_t}^2=v^\top F_tv$. Reference/KL regularization does not enter $q_t^R$ and is restored unchanged after routing.

We cache the selected state rather than an alternative continuation. With the detached behavior snapshot $\theta^-=\operatorname{sg}(\theta)$, replacing only the executed prompt $p_k$ by another current-policy candidate $p_j$ gives
\begin{align}
d_{j\leftarrow k,t}&=\mu_{\theta^-}(s_t^{k,m},p_j)-\mu_{\theta^-}(s_t^{k,m},p_k),\nonumber\\
\bar d_{j\leftarrow k,t}&=\operatorname{TrustClip}_{F_t}(d_{j\leftarrow k,t}),\qquad
c_{j\leftarrow k,t}=\bar d_{j\leftarrow k,t}^{\top}q_t^R.
\label{eq:response}
\end{align}
The probe is intentionally one-transition and same-state: $s_t^{k,m}$ is seed/state dependent, but within each comparison the state, stochastic draw, and behavior snapshot are fixed and only text conditioning changes. Continuing $p_j$ would mix this local conditioning response with later nonlinear state evolution and additional stochasticity. We use
$\operatorname{TrustClip}_{F_t}(d)=d\min\!\left(1,\sqrt{\tau_{\rm batch}/(d^\top F_t d+\epsilon_{\rm num})}\right)$,
where $\epsilon_{\rm num}=10^{-12}$ and $\tau_{\rm batch}$ is the detached 95th percentile of the current-batch native-metric response energy $d^\top F_td$. Thus in-region responses are unchanged and only unusually large conditioning shifts are rescaled. The score $c_{j\leftarrow k,t}$ is the resulting local directional derivative along an observed prompt response; its sign records agreement with the reward branch. Full-trajectory utility of prompt candidates is still measured by the $K\!\times\!M$ rollouts in Eq.~\ref{eq:grid}.

\begin{figure*}[t]
\centering
\includegraphics[width=0.985\textwidth]{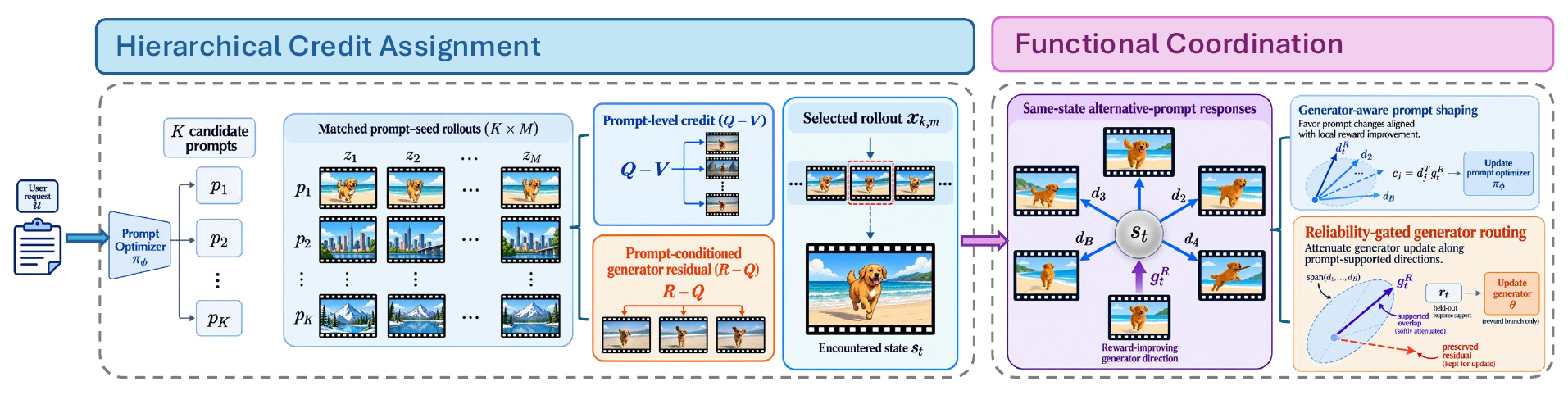}
\caption{\textbf{\method{} training overview.} Hierarchical credit assignment samples $K$ prompt candidates and reuses $M$ matched generator seeds, so cross-prompt contrasts estimate prompt-level credit $Q-V$ while within-prompt residuals estimate $R-Q$. A selected rollout supplies an encountered state $s_t$ for functional coordination. At that fixed state, alternative-prompt responses define a local geometry used for generator-aware prompt shaping and reliability-gated routing of only the generator reward branch.}
\label{fig:geometry}
\end{figure*}

\paragraph{Generator-aware prompt shaping.}
Let $\mathcal S_{k,m}$ denote the selected routed transition indices on trajectory $(k,m)$ (one by default). Cross-fitting avoids scoring $p_j$ on states produced by itself:
\begin{equation}
C_j=\frac1{K-1}\sum_{k\neq j}\E_{m,\,t\in\mathcal S_{k,m}}[c_{j\leftarrow k,t}],\qquad
A_j^{P}=\operatorname{Cal}_{R}(\widehat A_j^{P,R})+\lambda_F\operatorname{Cal}_{C}(C_j).
\label{eq:pe-final}
\end{equation}
For $X\in\{R,C\}$, $\operatorname{Cal}_X(x)=(x-\mu_X^{\rm cal})/(\sigma_X^{\rm cal}+10^{-6})$ uses statistics estimated once on a disjoint training-only calibration split and then frozen; $\operatorname{Cal}_R$ acts on terminal prompt credit and $\operatorname{Cal}_C$ on functional evidence. The terminal term remains primary, while $C_j$ supplies generator-aware auxiliary shaping; $\lambda_F$ is fixed on a disjoint development split.

\paragraph{Reliability-gated generator routing.}
Let $D_t=[\bar d_{j\leftarrow k,t}]_{j\neq k}$. The sampled current-policy response operator is a trace-normalized ridge map. To estimate whether this finite span generalizes beyond the $K-1$ rollout candidates, we sample $B_{\rm train}$ additional intent-preserving prompt alternatives at the same state. Each requires only one same-state generator transition forward: no video is decoded and no reward model is called. With response $\widetilde d_b$, define
\begin{equation}
\begin{aligned}
P_{\lambda,t}&=D_t(D_t^\top F_tD_t+\lambda_tI)^{-1}D_t^\top F_t,
&\lambda_t&=\epsilon_{\rm ridge}\frac{\operatorname{tr}(D_t^\top F_tD_t)}{K-1},\\
r_t&=\operatorname{clip}\!\left(\frac1{B_{\rm train}}\sum_b
\frac{\|P_{\lambda,t}\widetilde d_b\|_{F_t}^2}{\|\widetilde d_b\|_{F_t}^2+\epsilon_{\rm num}},0,1\right),
&\alpha_t&=\alpha_{\max}r_t,\\
g_t^G&=(I-\alpha_tP_{\lambda,t})g_t^R,
&q_t^G&=F_tg_t^G.
\end{aligned}
\label{eq:routing}
\end{equation}
Here $\epsilon_{\rm ridge}$ is the fixed ridge scale, $\alpha_{\max}$ caps the routing strength, and $q_t^G$ is the routed reward covector backpropagated through the generator. The score $r_t$ measures held-out response support rather than reward confidence. Low-support states continuously recover the base reward step. For stochastic-transition updaters such as Flow-GRPO, $g_t^R$ is a stochastic estimate of the local reward gradient; the response operator and support gate are built from the cached state and detached prompt responses. Consequently the same linear routing applies to the conditional expected reward gradient, rather than implying that one high-dimensional transition sample itself has large prompt-supported energy. Appendix~\ref{app:transition-geometry} states this distinction explicitly. The same geometry therefore serves two targets: $C_j$ shapes the upstream prompt action, whereas Eq.~\ref{eq:routing} regularizes the downstream expected reward update without assuming an exact transfer between optimizers.

\subsection{Training procedure}
One transition per decoded trajectory is routed by default, sampled uniformly from the valid stochastic-policy steps used by the base reward loss; all other reward-loss transitions and the entire regularization/reference branch remain unchanged. Same-state quantities are stop-gradient targets from the detached behavior snapshot and are recomputed online each round. Default hyperparameters and backend-specific instantiations are in Appendices~\ref{app:implementation} and~\ref{app:transition-geometry}.

\begin{algorithm}[H]
\caption{One joint post-training round}
\label{alg:flowcred}
\small
\begin{algorithmic}[1]
\Require request $u$; prompt policy $\pi_\phi$; generator $G_\theta$; $K$ candidates; $M$ matched seeds
\Statex \textbf{Hierarchical credit assignment}
\State sample $p_{1:K}$ and decode $x_{k,m}=G_\theta(p_k;z_m)$ with shared $z_{1:M}$
\State evaluate $R_{k,m}$; compute $\widehat A^{P,R}_k$ and $\widehat A^G_{k,m}$
\Statex \textbf{Functional coordination at actual rollout states}
\For{each selected transition state $s_t^{k,m}$}
\State obtain $(q_t^R,g_t^R)$ from the hierarchy-weighted base reward branch
\State probe $d_{j\leftarrow k,t}$ for $j\neq k$ and accumulate cross-fitted evidence $C_j$
\State construct $P_{\lambda,t}$; estimate $r_t$ with $B_{\rm train}$ non-decoded prompt probes
\State route $g_t^G\leftarrow(I-\alpha_{\max}r_tP_{\lambda,t})g_t^R$
\EndFor
\State update prompt policy with Eq.~\ref{eq:pe-final}; update generator with the routed reward branch plus unchanged regularization
\end{algorithmic}
\end{algorithm}

\section{Experiments}
\label{sec:experiments}
\subsection{Protocol and controlled comparisons}
\paragraph{Setup.}
Our primary system uses LTX-2.5~\citep{hacohen2026ltx2,lightricks2026ltx25}, a Qwen3.5-9B prompt optimizer, Flow-GRPO, and $K=M=4$. Controlled variants share initialization, request stream, prompt sampling, matched seeds, reward, decoded-video budget, and optimization schedule; functional coordination adds same-state transition forwards but no extra video decode. The frozen training reward uses public, fixed scorers with weights $0.35/0.30/0.25/0.10$: the Text-Alignment head of VideoReward~\citep{liu2025videoreward} for request faithfulness, frame-averaged VQAScore (CLIP-FlanT5-XXL)~\citep{lin2024vqascore} for compositional correspondence, the temporal-consistency/text-alignment aggregate of VideoScore-v1.1~\citep{he2024videoscore} for event realization, and the visual-/motion-quality aggregate of VideoReward as a bounded quality stabilizer. All scorers and calibration statistics are frozen before controlled training. Appendices~\ref{app:implementation} and~\ref{app:reward-recipe} specify checkpoints, frame sampling, calibration, and scorer-family robustness.

\paragraph{Evaluation and comparisons.}
We report \VBclassic{}~\citep{huang2024vbench}, \VBtwo{}~\citep{zheng2025vbench2}, \TtwoVC{}~\citep{sun2025t2vcompbench}, and StoryEval~\citep{wang2025storyeval} on the original request $u$ with matched evaluation seeds. The hierarchical-credit-versus-Full comparison and the functional-coordination factorial use three independent training runs; other trainable baselines use two. The controlled chain spans prompt-only adaptation, a raw-request Flow-GRPO generator-only reference, PromptRL, UniGRPO, Matched Global-LOO, hierarchical credit, and Full. For methods originally reported with different backbones or training stacks, we implement the published objective in the common Qwen3.5/LTX-2.5 harness; the table therefore reports controlled common-harness results rather than copied published absolute scores. CAPE-T2V is positioned separately because its public LTX recipe uses sequential two-sided adaptation with additional video-derived caption supervision and a different released LTX backbone; Appendix~\ref{app:baseline-adaptation} gives the precise comparison scope. Uncertainty and data-integrity protocols are summarized in Appendices~\ref{app:evaluation},~\ref{app:integrity}, and~\ref{app:robustness}.

\subsection{Main benchmark performance}
Table~\ref{tab:main} compares single-side and joint alternatives under one LTX-2.5 harness. Because prompt-side adaptation already closes substantial interface-level headroom, the matched joint chain is the key test of hierarchical credit and functional coordination.

\begin{table}[H]
\caption{\textbf{Main LTX-2.5 comparison under a common evaluation harness.} Mean scores use matched evaluation seeds; training-run uncertainty is reported in Appendix~\ref{app:robustness}. Best is bold and second-best is underlined.}
\label{tab:main}
\centering
\small
\setlength{\tabcolsep}{4.6pt}
\renewcommand{\arraystretch}{1.02}
\begin{tabularx}{\linewidth}{@{}>{\raggedright\arraybackslash}X c c c c@{}}
\toprule
Method & \VBclassic{}$\uparrow$ & \VBtwo{}$\uparrow$ & \TtwoVC{}$\uparrow$ & StoryEval$\uparrow$ \\
\midrule
\rowcolor{groupbg}
\multicolumn{5}{@{\hspace{4pt}}l@{}}{\color{grouptext}\bfseries\small\textsc{Frozen / interface references}}\\[-0.5pt]
\cmidrule(lr){1-5}
Raw prompt & 78.14 & 54.52 & 46.52 & 37.88 \\
Official LTX-2.5 prompt enhancer & 84.92 & 61.36 & 57.48 & 51.63 \\
\addlinespace[2pt]
\rowcolor{groupbg}
\multicolumn{5}{@{\hspace{4pt}}l@{}}{\color{grouptext}\bfseries\small\textsc{Single-side adaptation}}\\[-0.5pt]
\cmidrule(lr){1-5}
Prompt-A-Video & 85.63 & 62.18 & 58.87 & 52.96 \\
VPO & 85.44 & \second{64.06} & 59.54 & 53.61 \\
Prompt-only & \second{86.97} & 62.84 & 60.28 & 54.37 \\
Flow-GRPO & 80.83 & 58.57 & 52.31 & 45.26 \\
\addlinespace[2pt]
\rowcolor{groupbg}
\multicolumn{5}{@{\hspace{4pt}}l@{}}{\color{grouptext}\bfseries\small\textsc{Joint post-training --- matched decoded-rollout budget}}\\[-0.5pt]
\cmidrule(lr){1-5}
PromptRL & 86.31 & 63.02 & 61.06 & 55.02 \\
UniGRPO & 86.40 & 63.28 & 61.32 & 55.40 \\
Matched Global-LOO & 86.49 & 63.41 & 61.47 & 55.58 \\
\method{} (hierarchical credit) & 86.91 & 63.92 & \second{62.63} & \second{56.71} \\
\rowcolor{rowlight}\textbf{\method{} (Full)} & \best{87.56} & \best{64.73} & \best{64.46} & \best{58.08} \\
\bottomrule
\end{tabularx}
\end{table}

Prompt-side adaptation already closes substantial interface-level headroom, making the matched joint chain the stricter test. In particular, Prompt-only is already competitive on VBench, whereas the value of joint adaptation is more visible on compositional and multi-event metrics: Full exceeds Prompt-only by +4.18 T2V-CompBench and +3.71 StoryEval points. Within the matched joint chain, PromptRL and UniGRPO remain below Matched Global-LOO; hierarchical credit then improves over Global-LOO by +0.42/+0.51/+1.16/+1.13 on VBench/VBench-2.0/T2V-CompBench/StoryEval, and Full adds another +0.65/+0.81/+1.83/+1.37. The hierarchical-credit-to-Full increment is positive in all three independent training runs on every headline benchmark. Appendix~\ref{app:robustness} reports run-level deltas and intervals, and Table~\ref{tab:full-breakdown} gives the aggregate breakdown.

\subsection{Hierarchical-credit diagnostics}
The two estimator choices are directly testable at fixed decoded-video budget. Conditioning the generator baseline on the executed prompt removes prompt-level nuisance variation from the downstream advantage, while matched seeds pair prompt candidates under common generator randomness. Table~\ref{tab:level1-main} reports held-out diagnostics; the corresponding distributions and protocol are in Appendix~\ref{app:level1-efficiency}. Together, these diagnostics support the intended separation of prompt-level and prompt-conditioned variation while keeping the decoded-rollout budget fixed.

\begin{table}[H]
\caption{\textbf{Hierarchical-credit diagnostics at matched rollout budget.} Lower variance is better; the final row keeps the decoded-video count fixed.}
\label{tab:level1-main}
\centering
\small
\setlength{\tabcolsep}{5.5pt}
\begin{tabular}{lccc}
\toprule
Diagnostic & Reference & Hierarchical design & Result \\
\midrule
Generator-advantage variance & Global LOO & Prompt-conditioned LOO & $-31\%$ \\
Prompt-credit variance & Independent seeds & Matched seeds & $-27\%$ \\
\TtwoVC{} & Independent seeds & Matched seeds & $62.21\rightarrow62.63$ \\
\bottomrule
\end{tabular}
\end{table}

\subsection{Functional-coordination attribution and mechanism diagnostics}
The two uses of the same-state geometry contribute independently, and their combination yields the strongest result. Table~\ref{tab:components-main} reports the three-run mean (SD) endpoint scores, while Fig.~\ref{fig:mechanism-evidence} summarizes the complementary endpoint-attribution and realized-update audits. Relative to hierarchical credit, prompt shaping alone improves VBench/StoryEval/T2V-CompBench by +0.29/+0.65/+0.77, generator routing alone by +0.38/+0.47/+1.04, and Full by +0.65/+1.37/+1.83.

\begin{table}[H]
\caption{\textbf{Functional-coordination factorial ablation.} Mean (SD) across the same three-run protocol used for the HC--Full comparison.}
\label{tab:components-main}
\centering
\small
\setlength{\tabcolsep}{5.2pt}
\renewcommand{\arraystretch}{1.02}
\begin{tabular}{ccccc}
\toprule
Prompt shaping & Generator routing & \VBclassic{}$\uparrow$ & StoryEval$\uparrow$ & \TtwoVC{}$\uparrow$ \\
\midrule
-- & -- & \msd{86.91}{0.14} & \msd{56.71}{0.27} & \msd{62.63}{0.32} \\
$\checkmark$ & -- & \msd{87.20}{0.15} & \msd{57.36}{0.28} & \msd{63.40}{0.34} \\
-- & $\checkmark$ & \msd{87.29}{0.13} & \msd{57.18}{0.26} & \msd{63.67}{0.31} \\
\rowcolor{rowlight}$\checkmark$ & $\checkmark$ & \textbf{\msd{87.56}{0.12}} & \textbf{\msd{58.08}{0.24}} & \textbf{\msd{64.46}{0.29}} \\
\bottomrule
\end{tabular}
\end{table}

\begin{figure}[H]
\centering
\begin{minipage}[t]{0.49\linewidth}
\centering
\includegraphics[width=0.99\linewidth]{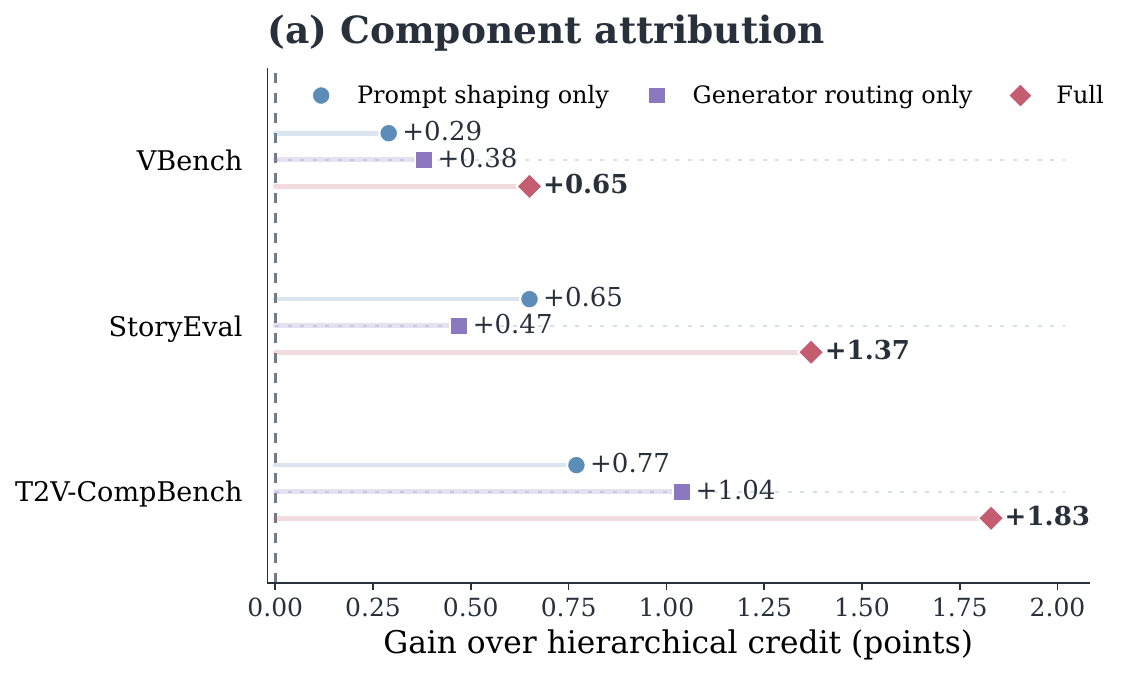}
\subcaption{Component attribution.}
\end{minipage}\hfill
\begin{minipage}[t]{0.49\linewidth}
\centering
\includegraphics[width=0.99\linewidth]{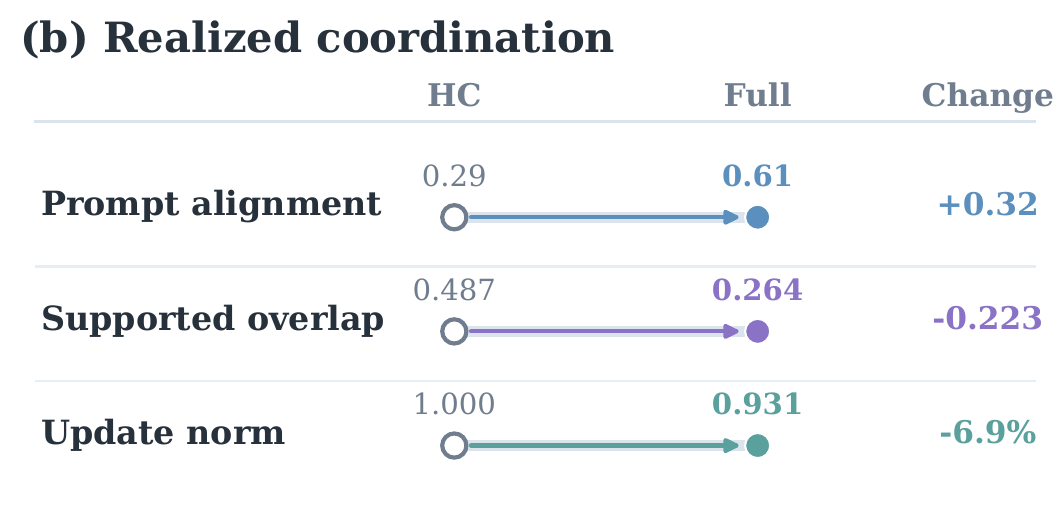}
\subcaption{Realized coordination.}
\end{minipage}
\caption{\textbf{Functional-coordination endpoint and mechanism evidence.} \textbf{(a)} Relative to hierarchical credit, prompt shaping and generator routing each improve endpoint performance, and their combination is strongest on all three metrics. \textbf{(b)} In a matched cached-state audit, Full increases prompt-side alignment, reduces supported overlap, and retains most of the overall generator update norm.}
\label{fig:mechanism-evidence}
\end{figure}

\paragraph{Realized coordination.}
The factorial result is complemented by a matched one-step audit on cached states. Rather than interpreting the projection energy of an individual stochastic policy-gradient sample, we measure the neural displacement after the optimizer step. Full reduces realized generator supported overlap from 0.487 to 0.264 while retaining 0.931$\times$ the hierarchical-credit update norm; on the prompt side, the native-metric alignment cosine rises from 0.29 to 0.61. Shaping-only and routing-only variants isolate the corresponding upstream and downstream changes (Appendix~\ref{app:realized-coordination}). Together with the endpoint factorial, these measurements support selective coordination rather than a generic shrinkage explanation, while remaining descriptive checks of trained behavior rather than an exact transfer of causal responsibility.
\FloatBarrier

\subsection{Prompt-policy signal validity and routing controls}
On 360 held-out requests, reward-only candidate ranking reaches $\rho=0.447$ / 59.2\% Top-1 on eight disjoint fresh seeds, cosine functional evidence reaches $\rho=0.569$ / 67.2\%, and the directional functional score used by \method{} reaches $\rho=0.612$ / 71.7\%. This provides an out-of-seed check that same-state generator information carries signal beyond terminal reward ranking alone.

Matched-budget controls test simpler routing explanations. Rank-matched random spans and wrong-query spans stay near the hierarchical-credit reference; uniform shrink yields only a small gain, while removing cross-fitting or the reliability gate reduces the Full improvement. Figure~\ref{fig:mechanism-controls} summarizes both checks; complete control values are reported in Appendix~\ref{app:mechanism-audit-full}.

\begin{figure}[H]
\centering
\begin{minipage}[t]{0.49\textwidth}
\centering
\includegraphics[width=0.99\linewidth]{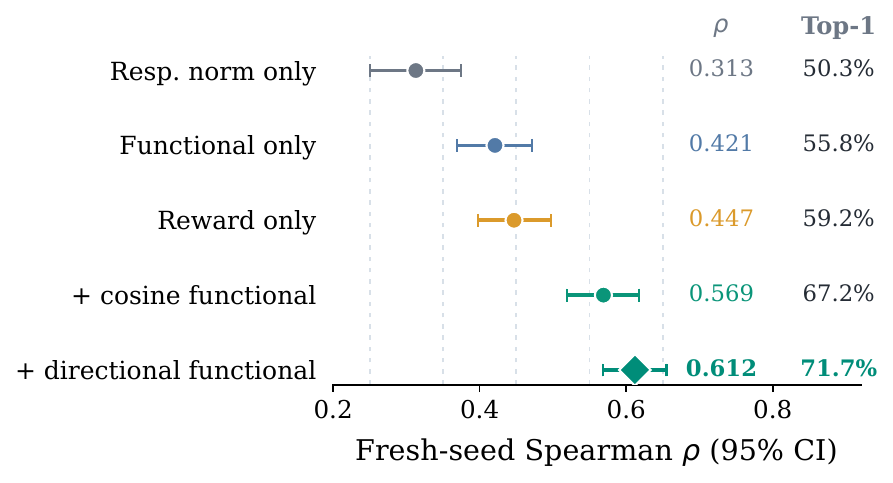}
\subcaption{Fresh-seed predictive utility.}
\end{minipage}\hfill
\begin{minipage}[t]{0.49\textwidth}
\centering
\includegraphics[width=0.99\linewidth]{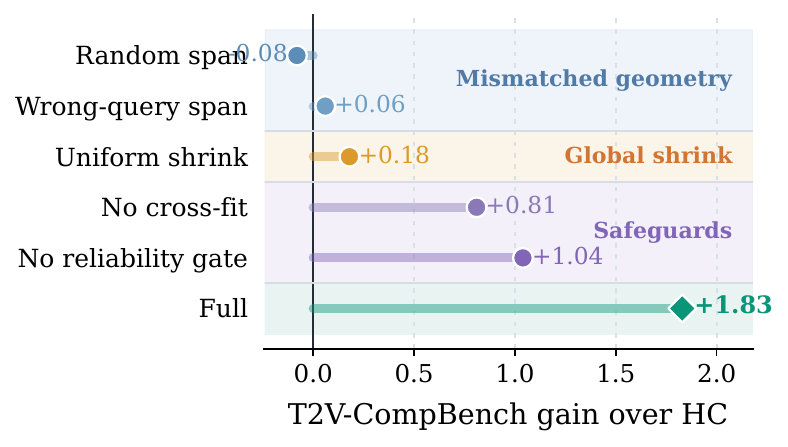}
\subcaption{Compact matched-budget routing controls.}
\end{minipage}
\caption{\textbf{Functional signal validity and routing controls.} Directional evidence improves fresh-seed candidate ranking beyond reward-only scoring, while compact routing controls test mismatched geometry and safeguard removal.}
\label{fig:mechanism-controls}
\end{figure}

\subsection{Generalization and external validation}
\noindent\textbf{Transfer evidence.} We transfer the functional-coordination increment to HunyuanVideo-1.5 and to RVM and Diffusion-DRF generator updates. Table~\ref{tab:transfer-main} reports matched Full-minus-hierarchical-credit deltas rather than heterogeneous absolute rankings.

\begin{table}[H]
\caption{\textbf{Transfer of functional coordination.} Full minus matched hierarchical credit under each backbone/updater.}
\label{tab:transfer-main}
\centering
\small
\setlength{\tabcolsep}{6.0pt}
\begin{tabular}{lccc}
\toprule
Setting & $\Delta$\VBclassic{} & $\Delta$\TtwoVC{} & $\Delta$StoryEval \\
\midrule
LTX-2.5 / Flow-GRPO & +0.65 & +1.83 & +1.37 \\
HunyuanVideo-1.5 / Flow-GRPO & +0.57 & +1.22 & +1.09 \\
LTX-2.5 / RVM & +0.52 & +1.29 & +1.15 \\
LTX-2.5 / Diffusion-DRF & +0.41 & +0.97 & +0.79 \\
\bottomrule
\end{tabular}
\end{table}

The increment remains positive in every tested backbone/updater setting, although its magnitude varies, indicating that functional coordination is not tied to a particular generator backbone or reward-update rule.

Full keeps the decoded-video and reward-model-call budget identical to hierarchical credit; its additional generator work consists of non-decoded same-state transition probes, counted explicitly in Appendix~\ref{app:compute}. Frozen-module swaps attribute measurable gains to both sides: substituting the Full generator into the hierarchical-credit pair adds +0.90 T2V-CompBench and substituting the Full prompt policy adds +0.72. In a 240-request blinded evaluation, Full is preferred over hierarchical credit on 134 versus 48 requests (58 ties; 73.6\% of non-ties) and over PromptRL on 142 versus 46 (52 ties; 75.5\% of non-ties). An independent VisionReward evaluator, never used in training or model selection, also prefers Full to hierarchical credit on 63.3\% of 300 held-out matched pairs. Detailed annotation and replication protocols are in Appendices~\ref{app:robustness} and~\ref{app:human}.

Taken together, these checks test complementary explanations of the primary gain. Transfer across a second backbone and alternative updater families probes dependence on the default generator stack; frozen-module swaps verify contributions from both trained modules; and blinded human judgments plus the held-out VisionReward evaluator test whether the improvement is confined to the training reward.

\noindent\textbf{External two-sided positioning.} PromptRL and UniGRPO use their published joint-training objectives in the common LTX-2.5 harness. CAPE-T2V is reported only as an external two-sided reference because its public pipeline uses a different released LTX backbone and auxiliary video-derived captions. Appendix~\ref{app:baseline-adaptation} details the comparison scope; Appendix~\ref{app:robustness} reports rollout-budget and paired-run checks.

\subsection{Qualitative coordination cases}
Figure~\ref{fig:qual-main} shows two representative cases with complementary demands. In the structured repair scene, Base and VPO capture the broad setting but do not make the requested spacecraft-repair interaction equally explicit, whereas \method{} preserves both the scene structure and the target interaction. In the directional-motion case, the generated sequence shows clearer execution of the requested right-to-left motion for \method{}, while the frame strip highlights its subject consistency; the alternatives show weaker directional execution or greater subject ambiguity. Appendix~\ref{app:qualitative} provides four additional cases spanning counting, interaction, spatial relation, and synchronized appearance change.

\begin{figure}[H]
\centering
\includegraphics[width=0.99\linewidth]{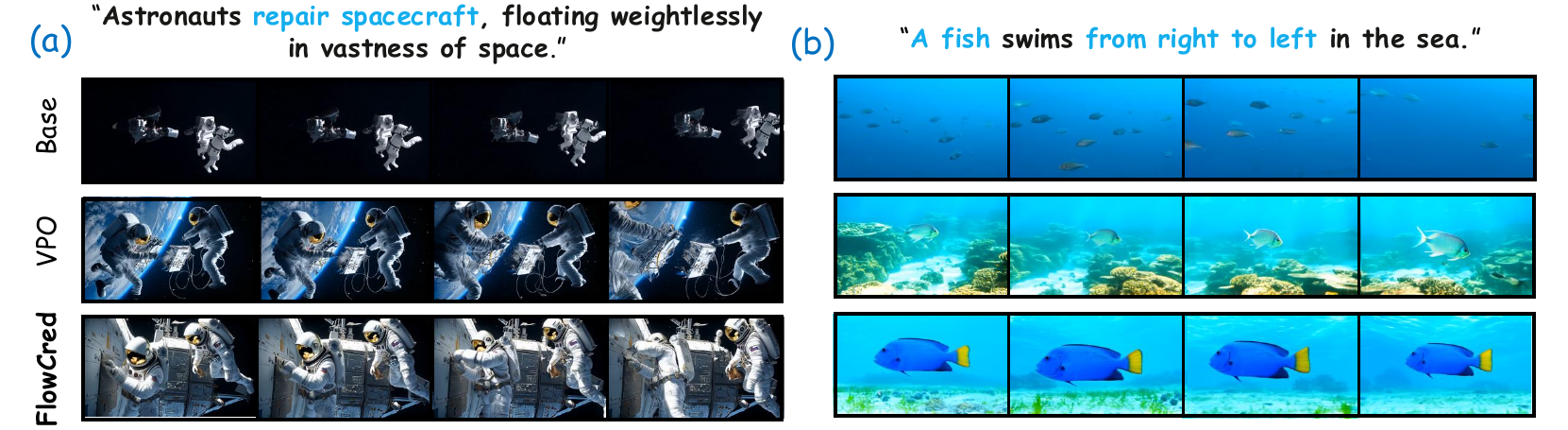}
\caption{\textbf{Representative qualitative comparison.} Base, VPO, and \method{} on structured scene realization and directional motion. Additional qualitative comparisons are provided in the appendix.}
\label{fig:qual-main}
\end{figure}

\section{Conclusion}
We presented \method{}, a framework for jointly post-training a prompt optimizer and a text-to-video generator without treating a shared terminal reward as undifferentiated supervision. Hierarchical credit assignment uses matched prompt--seed rollouts to separate prompt choice from prompt-fixed stochasticity, while functional coordination uses same-state response geometry for generator-aware prompt shaping and reliability-gated routing of the generator reward branch. Controlled comparisons show that the gains cannot be explained by stronger prompts, additional decoded rollouts, or generic update shrinkage alone, and evidence from repeated runs, mechanism analyses, human evaluation, and transfer studies is consistent with the intended division of labor between prompt-side conditioning and residual generator execution. More broadly, \method{} suggests that joint post-training should distinguish \emph{where outcome variation arises} from \emph{where trainable modules can realize overlapping functional changes}: sharing a reward is not sufficient by itself; credit assignment and update coordination are separate design problems in cascaded generative systems.

\section*{AI use statement}
Generative AI tools were used to assist with conceptual and experimental design, literature search, code development, and manuscript drafting and editing. All AI-assisted material was reviewed by the authors, who are responsible for the final submission.

\section*{Reproducibility statement}
Section~\ref{sec:method} specifies the \method{} formulation and training procedure; Appendices~\ref{app:theory}--\ref{app:human} provide derivations, implementation details, reward/calibration and baseline protocols, evaluation, compute, robustness, human evaluation, and qualitative results.

\begingroup
\small
\setlength{\bibsep}{2.0pt}
\bibliography{references}
\bibliographystyle{flowcred_preprint}
\endgroup

\clearpage
\appendix
\input{appendix.tex}
\end{document}

%% file: appendix.tex
\section{Theoretical Foundations}
\label{app:theory}
This appendix collects the mathematical statements used in the main paper. The reward hierarchy is exact for any per-trajectory scalar terminal score and yields valid score-function baselines. In the final protocol, reward-component and prompt-term affine scales are estimated once on disjoint training-only calibration data and then frozen, so current minibatch composition does not enter the score used by the hierarchy (Appendix~\ref{app:reward-recipe}). Same-state prompt intervention provides a first-order directional derivative of the generator reward surrogate. Minimum-overlap routing is a local function-space regularizer; its non-reversal statement is made in the backend's local output coordinates before neural pullback.

\subsection{Hierarchy-aware statistical credit}
\label{app:reward-theory}
Recall
\begin{equation}
J(\phi,\theta)=\E_{u\sim\mathcal D}\E_{p\sim\pi_\phi(\cdot|u)}\E_{\mathcal T\sim\pi_\theta(\cdot|p)}[R(u,x)],
\end{equation}
with
\begin{equation}
Q_\theta(u,p)=\E_{\mathcal T\sim\pi_\theta(\cdot|p)}[R(u,x)],\qquad
V_{\phi,\theta}(u)=\E_{p\sim\pi_\phi(\cdot|u)}Q_\theta(u,p).
\end{equation}

\begin{proposition}[Hierarchy-aware score-function baselines]
\label{prop:reward}
For the factorized policy $\pi_\phi(p|u)\pi_\theta(\mathcal T|p)$,
\begin{align}
\nabla_\phi J
&=\E[(Q_\theta(u,p)-V_{\phi,\theta}(u))\nabla_\phi\log\pi_\phi(p|u)],\\
\nabla_\theta J
&=\E[(R(u,x)-\sg{Q_\theta(u,p)})\nabla_\theta\log\pi_\theta(\mathcal T|p)].
\end{align}
\end{proposition}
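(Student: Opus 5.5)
The plan is to use the log-derivative trick on the factorized sampling density, then subtract a baseline whose score-weighted expectation is zero. We assume enough regularity to exchange gradients and integrals: the rewards are bounded and the two policy densities are differentiable, with integrable score functions. We also assume that $R(u,x)$ depends on the parameters only through the sampled trajectory. Neither $\phi$ nor $\theta$ enters $R$ directly, since $x$ is read off $\mathcal T$ and $R$ is scored against the fixed request $u$.

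\textbf{Prompt gradient.} Write $J(\phi,\theta)=\E_u\sum_p\pi_\phi(p|u)Q_\theta(u,p)$ (an integral if prompts are continuous). Here $Q_\theta$ does not depend on $\phi$, because the generator conditions only on $p$. Differentiating under the expectation gives $\nabla_\phi J=\E_{u,p}[Q_\theta(u,p)\nabla_\phi\log\pi_\phi(p|u)]$. Next we use $\E_{p\sim\pi_\phi(\cdot|u)}[\nabla_\phi\log\pi_\phi(p|u)]=\nabla_\phi\sum_p\pi_\phi(p|u)=0$. Because $V_{\phi,\theta}(u)$ is a function of $u$ alone, this lets us subtract it inside the expectation without changing its value. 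The result is the first identity. Note that $V$ depends on $\phi$, but the identity is evaluated at fixed parameters, so $V$ is only a coefficient and is never differentiated.

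\textbf{Generator gradient.} Condition on $(u,p)$. Since $\pi_\phi(p|u)$ does not depend on $\theta$, we have $\nabla_\theta J=\E_{u,p}\nabla_\theta Q_\theta(u,p)$. We then apply the log-derivative trick to $Q_\theta(u,p)=\int\pi_\theta(\mathcal T|p)R(u,x(\mathcal T))\,d\mathcal T$, which gives $\E[R\,\nabla_\theta\log\pi_\theta(\mathcal T|p)]$. The baseline $\sg{Q_\theta(u,p)}$ is measurable with respect to $(u,p)$, and the conditional expectation of the score given $(u,p)$ is zero. So the baseline term vanishes after applying the tower property. The stop-gradient only records that $Q$ is treated as a constant coefficient rather than differentiated. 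This matches the fixed-parameter evaluation already used for the prompt gradient.

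\textbf{Main obstacle.} We expect the delicate step to be justifying that the baseline may depend on $p$ in the $\theta$-gradient but only on $u$ in the $\phi$-gradient. The key ordering is the hierarchy $u\to p\to\mathcal T$: a baseline is valid exactly when it is measurable with respect to the variables that precede the sampled action being scored. We will make this explicit by writing each baseline term as an iterated expectation whose innermost integral is the gradient of a normalized density. If $\mathcal T$ is a multi-step stochastic chain, $\log\pi_\theta(\mathcal T|p)$ is a sum of per-transition log-probabilities. The same argument still applies, since $Q(u,p)$ precedes every transition in the chain. Finally, we will note that the exchange of gradient and integral follows from dominated convergence under the stated regularity assumptions.
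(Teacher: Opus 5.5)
Your proof is correct and follows the same route as the paper. You apply the score-function identity to the factorized density, then subtract baselines whose score-weighted expectation vanishes because they depend only on the variables that precede the scored action ($u$ for the prompt, $(u,p)$ for the trajectory). You also make explicit the regularity, tower-property, and multi-step-chain details that the paper's two-line proof leaves implicit.
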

\begin{proof}
For the prompt policy, apply the score-function identity and subtract $V(u)$, which does not depend on the sampled prompt action. For the generator, any prompt-dependent quantity that is fixed with respect to the sampled downstream trajectory is a valid score-function baseline; choosing $Q(u,p)$ gives the second expression. The stop-gradient indicates that the statement concerns a policy-gradient baseline rather than differentiation through the value estimate.
\end{proof}

The pointwise identity $R-V=(Q-V)+(R-Q)$ follows immediately. It is a statistical decomposition of reward variation. Functional coordination in Section~\ref{sec:level2} then shapes the joint objective using same-state response evidence, so the two levels have distinct mathematical status.

\subsection{Relation to the matched global-LOO control}
\label{app:naive-relation}
The matched control uses the same sampled actions and matched rollout grid as hierarchical credit, but both modules receive a global leave-one-out advantage
\begin{equation}
A^{\rm GLOO}_{k,m}=R_{k,m}-\frac{1}{KM-1}\sum_{(j,n)\neq(k,m)}R_{j,n}.
\end{equation}
For a fixed prompt action $p_k$, averaging over its $M$ trajectories yields a positive affine rescaling of $Q_k-\bar Q$ and therefore preserves the coarse prompt ordering. The hierarchical-credit prompt-policy estimator in Eq.~\ref{eq:hierarchical-estimators} is likewise a positive rescaling of $Q_k-\bar Q$. We consequently do not attribute the hierarchical-credit gain to a new prompt ordering. The controlled change is the downstream generator baseline: Global-LOO removes only the current trajectory, whereas hierarchical credit removes prompt-level variation as well by comparing the trajectory against other seeds under the same executed prompt. The global-LOO construction also eliminates the self-inclusion confound that would otherwise accompany a global mean baseline.

\subsection{Variance reduction from matched seeds}
For candidates $p_k,p_j$, write $R_k(z)$ and $R_j(z)$ for rewards obtained with common seed $z$. Define
\begin{equation}
\widehat\Delta_{kj}^{\rm match}=\frac1M\sum_{m=1}^{M}[R_k(z_m)-R_j(z_m)].
\end{equation}
For an independently seeded comparison, use $z_m$ for $k$ and independent $z_m'$ for $j$. Let $\sigma_k^2=\operatorname{Var}(R_k)$, $\sigma_j^2=\operatorname{Var}(R_j)$, and $\sigma_{kj}=\operatorname{Cov}(R_k(z),R_j(z))$.

\begin{proposition}[Matched-seed variance]
Assume independent seed pairs across $m$ and finite second moments. Then
\begin{align}
\operatorname{Var}(\widehat\Delta_{kj}^{\rm match})&=\frac{\sigma_k^2+\sigma_j^2-2\sigma_{kj}}{M},\\
\operatorname{Var}(\widehat\Delta_{kj}^{\rm ind})&=\frac{\sigma_k^2+\sigma_j^2}{M}.
\end{align}
Hence matched seeds reduce variance by $2\sigma_{kj}/M$ whenever $\sigma_{kj}\ge0$.
\end{proposition}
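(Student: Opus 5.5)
The plan is a direct second-moment computation for each estimator. Throughout, the candidate prompts $p_k,p_j$ are held fixed, so all randomness comes from the generator seeds. For the matched estimator, set $D_m=R_k(z_m)-R_j(z_m)$. The seeds $z_1,\dots,z_M$ are independent and identically distributed, so the $D_m$ are i.i.d. They have finite variance because both rewards have finite second moments, which makes the cross term finite by Cauchy--Schwarz. Expanding the variance of a difference gives
\[
\operatorname{Var}(D_m)=\sigma_k^2+\sigma_j^2-2\sigma_{kj},
\]
where $\sigma_{kj}$ is exactly the same-seed covariance defined above. The variance of an average of $M$ i.i.d. terms is the single-term variance divided by $M$, and this gives the first formula.

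For the independently seeded estimator, take $\widehat\Delta_{kj}^{\rm ind}=\frac1M\sum_m[R_k(z_m)-R_j(z_m')]$, where the $z_m'$ are independent of the $z_m$ and of each other. Within each term, $R_k(z_m)$ and $R_j(z_m')$ are independent, so their covariance vanishes. The terms are also independent across $m$. Hence each summand has variance $\sigma_k^2+\sigma_j^2$, and averaging gives $(\sigma_k^2+\sigma_j^2)/M$. One point needs care here: $\sigma_j^2$ must be the variance of $R_j(z')$ under the fresh seed. This holds because $z'$ is drawn from the same seed distribution as $z$, so it has the same marginal law.

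Subtracting the two formulas leaves $2\sigma_{kj}/M$, which is nonnegative exactly when $\sigma_{kj}\ge0$. There is no real obstacle. The only step to state carefully is the modelling assumption that the $M$ seed pairs are i.i.d. across $m$ and that the prompts are conditioned on. Without conditioning, a common prompt-level term would contribute an extra covariance identically to both estimators. That term would cancel in the difference, but it would change the absolute formulas, so I would state the result conditionally on $(u,p_k,p_j)$.
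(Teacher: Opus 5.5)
Your proof is correct and follows essentially the same route as the paper: apply the variance identity to the per-seed difference $R_k(z)-R_j(z)$, use independence across $m$ to divide by $M$, and note that the cross-covariance vanishes by construction for the independently seeded estimator. Your added care — identically distributed seeds, and stating the result conditionally on $(u,p_k,p_j)$ so that prompt-level variation does not enter — is implicit in the paper's setup and makes the statement precise.
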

\begin{proof}
Apply the variance identity to $R_k(z)-R_j(z)$ and use independence across $m$. The independent estimator has zero cross-covariance by construction.
\end{proof}
The prompt-policy estimator in Eq.~\ref{eq:hierarchical-estimators} averages these seed-wise contrasts over the other $K-1$ candidates, so common randomness enters the actual training estimator rather than only an auxiliary analysis.

\subsection{Reward-only transition geometry}
\label{app:transition-geometry}
For a Gaussian transition $\pi(a|\mu)=\mathcal N(\mu,\Sigma)$ with covariance independent of $\mu$,
\begin{equation}
\nabla_\mu\log\pi(a|\mu)=\Sigma^{-1}(a-\mu),\qquad
F=\E[\nabla_\mu\log\pi\nabla_\mu\log\pi^\top]=\Sigma^{-1}.
\end{equation}
If an unclipped reward surrogate has local coefficient $\widehat A$, then $q^R=\widehat A\Sigma^{-1}(a-\mu)$ and $g^R=F^{-1}q^R=\widehat A(a-\mu)$. This provides intuition for the Fisher coordinate used in the main text. The implementation differentiates the actual reward branch of Flow-GRPO after its ratio and clipping operations. Reference/KL terms stay inside $J_{\rm reg}$ and never enter $q^R$, preventing reward credit from being contaminated by a return-to-reference direction.

\paragraph{Stochastic sample versus expected reward direction.}
For Flow-GRPO, the single-transition vector above is a stochastic score-function sample, so its high-dimensional direction need not itself align strongly with a low-rank prompt-response span. Let $P_t$ and $\alpha_t$ be constructed from the cached pre-action state and detached prompt responses, hence fixed with respect to the current transition draw conditional on that state. Linearity gives
\begin{equation}
\mathbb E[(I-\alpha_tP_t)g_t^R\mid s_t]=(I-\alpha_tP_t)\,\mathbb E[g_t^R\mid s_t].
\label{eq:expected-routing}
\end{equation}
Thus routing is interpreted as a regularizer on the conditional expected reward update. We do not use the projection energy of one stochastic transition sample as a semantic-overlap statistic; the mechanism audits below instead use realized neural displacements after the optimizer step.

\subsection{Directional prompt-policy evidence}
Fix a behavior state $s_t$, executed action $p_k$, and differentiable local reward surrogate $J_{\rm reward,t}(\mu)$. Let $\mu_k=\mu(s_t,p_k)$, $q_t^R=\nabla_\mu J_{\rm reward,t}(\mu_k)$, and $d=\mu(s_t,p_j)-\mu_k$. Taylor's theorem gives
\begin{equation}
J_{\rm reward,t}(\mu_k+\varepsilon d)
=J_{\rm reward,t}(\mu_k)+\varepsilon(q_t^R)^\top d+\frac{\varepsilon^2}{2}d^\top H(\xi)d.
\end{equation}
Therefore $c=d^\top q_t^R$ is exactly the directional derivative at $\varepsilon=0$. It characterizes the local reward change along an observed same-state conditioning response; fresh-seed evaluation measures how far that local signal transfers to full trajectories. The held-out-response and fresh-seed experiments in Appendix~\ref{app:mechanism-extra} test whether this local quantity carries predictive information beyond the construction states.

\subsection{Minimum-overlap routing in Fisher coordinates}
\label{app:function-theory}
For positive-definite $F$, define $\langle a,b\rangle_F=a^\top Fb$ and $\|a\|_F^2=a^\top Fa$. Let $P$ be the $F$-orthogonal projector onto the response span,
\begin{equation}
P=D(D^\top FD)^\dagger D^\top F.
\end{equation}
Then $P^2=P$ and $P^\top F=FP$.

\begin{proposition}[Minimum-overlap solution]
\label{prop:routing}
For $\eta\ge0$, the local problem
\begin{equation}
\min_\delta\ \frac12\|\delta-g^R\|_F^2+\frac\eta2\|P\delta\|_F^2
\label{eq:app-min-overlap}
\end{equation}
has solution
\begin{equation}
\delta^*=(I-\alpha P)g^R,\qquad \alpha=\frac{\eta}{1+\eta},
\end{equation}
and
\begin{equation}
(q^R)^\top\delta^*=\|(I-P)g^R\|_F^2+(1-\alpha)\|Pg^R\|_F^2\ge(1-\alpha)\|g^R\|_F^2\ge0.
\end{equation}
\end{proposition}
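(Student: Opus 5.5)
The plan is to treat \eqref{eq:app-min-overlap} as a strictly convex quadratic in $\delta$, solve its first-order condition in closed form using the two projector identities $P^2=P$ and $P^\top F=FP$, and then compute the inner product $(q^R)^\top\delta^*$ through an $F$-orthogonal splitting of $g^R$.

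\textbf{Step 1 (projector identities).} I first check the identities stated just before the proposition, since everything else rests on them. Write $M=D^\top FD$, which is symmetric positive semidefinite.
\begin{itemize}
\item The Moore--Penrose pseudoinverse $M^\dagger$ is symmetric, so $P^\top F=FD\,M^\dagger D^\top F=FP$.
\item The identity $M^\dagger MM^\dagger=M^\dagger$ gives $P^2=DM^\dagger(D^\top FD)M^\dagger D^\top F=DM^\dagger D^\top F=P$.
\end{itemize}
These two facts also imply that $P$ is $F$-self-adjoint and $F$-orthogonal to $I-P$:
\[
(Pa)^\top F(I-P)b=a^\top FP(I-P)b=0 .
\]
This step is where I expect the only real care to be needed, because it relies on pseudoinverse algebra rather than an ordinary inverse. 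It applies to the exact projector $P$. The ridge map $P_{\lambda,t}$ of Eq.~\ref{eq:routing} is not idempotent, so the proposition is a statement about the $\lambda\to0$ idealization.

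\textbf{Step 2 (minimizer).} Since $F\succ0$, the first term of the objective is strictly convex in $\delta$ and the second is convex, so the objective is strictly convex. A stationary point is therefore the unique minimizer.
\begin{itemize}
\item The gradient is $F(\delta-g^R)+\eta P^\top FP\delta$.
\item By Step 1, $P^\top FP=FP^2=FP$, so the gradient equals $F\bigl((I+\eta P)\delta-g^R\bigr)$.
\item Invertibility of $F$ reduces stationarity to $(I+\eta P)\delta=g^R$.
\item For an idempotent $P$ one checks directly that
\[
(I+\eta P)(I-\alpha P)=I+\bigl(\eta-\alpha(1+\eta)\bigr)P=I \qquad\text{when } \alpha=\tfrac{\eta}{1+\eta}.
\]
\end{itemize}
Hence $\delta^*=(I-\alpha P)g^R$.

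\textbf{Step 3 (non-reversal).} Use $q^R=Fg^R$, which is the definition $g^R=F^{-1}q^R$. Then
\[
(q^R)^\top\delta^*=\|g^R\|_F^2-\alpha\,(g^R)^\top FPg^R .
\]
\begin{itemize}
\item By Step 1, $(g^R)^\top FPg^R=(g^R)^\top P^\top FPg^R=\|Pg^R\|_F^2$.
\item The $F$-orthogonal decomposition $g^R=Pg^R+(I-P)g^R$ gives the Pythagorean identity $\|g^R\|_F^2=\|Pg^R\|_F^2+\|(I-P)g^R\|_F^2$.
\end{itemize}
Substituting both yields the claimed equality
\[
(q^R)^\top\delta^*=\|(I-P)g^R\|_F^2+(1-\alpha)\|Pg^R\|_F^2 .
\]
Finally, $\eta\ge0$ gives $\alpha\in[0,1)$, so $1-\alpha\in(0,1]$. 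Lowering the coefficient of $\|(I-P)g^R\|_F^2$ from $1$ to $1-\alpha$ and applying Pythagoras again gives the bound $(1-\alpha)\|g^R\|_F^2\ge0$.
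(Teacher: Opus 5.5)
Your proof is correct and follows the same route as the paper. You write down the first-order condition $F(\delta-g^R)+\eta P^\top FP\delta=0$, reduce it via $P^\top F=FP$ and $P^2=P$ to $(I+\eta P)\delta=g^R$, invert with $I-\alpha P$, and then use the $F$-orthogonal split $g^R=Pg^R+(I-P)g^R$ for the non-reversal identity. Your pseudoinverse check of the projector identities and your strict-convexity argument for uniqueness fill in details the paper leaves implicit; they do not change the argument.
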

\begin{proof}
The first-order condition is $F(\delta-g^R)+\eta P^\top FP\delta=0$. Using $P^\top F=FP$ and $P^2=P$ yields $(I+\eta P)\delta=g^R$. Its inverse is $I-\eta(1+\eta)^{-1}P$. The final expression follows from the $F$-orthogonal decomposition $g^R=Pg^R+(I-P)g^R$.
\end{proof}

The statement is made in an unconstrained transition-mean coordinate. The neural update in Eq.~\ref{eq:routing} is a pullback through $J_{\mu,\theta}$; an arbitrary mean-space displacement need not be exactly realizable by the parameterization. We therefore use the proposition to motivate the local routing rule and test the realized parameter update directly through the audit in Appendix~\ref{app:realized-update}.

\subsection{Ridge stabilization and contraction}
The implemented operator in Eq.~\ref{eq:routing} uses ridge regularization. Let $B=F^{1/2}D$. In whitened coordinates,
\begin{equation}
\widetilde P_\lambda=B(B^\top B+\lambda_t I)^{-1}B^\top,\qquad
\lambda_t=\epsilon_{\rm ridge}\,\frac{\operatorname{tr}(D_t^\top F_tD_t)}{K-1},\quad \epsilon_{\rm ridge}=10^{-3}.
\end{equation}
If $B=U\operatorname{diag}(s_i)V^\top$, then
\begin{equation}
\widetilde P_\lambda=U\operatorname{diag}\!\left(\frac{s_i^2}{s_i^2+\lambda_t}\right)U^\top.
\end{equation}
Hence $\widetilde P_\lambda$ is symmetric positive semidefinite with eigenvalues in $[0,1)$. For $0\le\alpha\le1$ and $\widetilde g=F^{1/2}g^R$,
\begin{equation}
(q^R)^\top(I-\alpha P_\lambda)g^R
=\widetilde g^\top(I-\alpha\widetilde P_\lambda)\widetilde g
\ge(1-\alpha)\|\widetilde g\|_2^2\ge0.
\end{equation}
This covers the stabilized local operator used in training without calling it an exact projector.

\subsection{Reference action and finite response span}
The conditioning reference is the executed prompt action $p_k$ because it is the upstream action that produced $s_t$. The intervention of interest is therefore $p_k\rightarrow p_j$. Using the raw request $u$ as the reference mixes candidate-specific variation with the shared raw-to-enhanced displacement and can overstate directions common to all enhanced prompts; we keep that choice as an ablation.

With $K=4$, the main response matrix has at most three columns, so the geometry is a finite, policy-dependent response span. It measures overlap with alternatives actually proposed by the current prompt policy. Rank-matched random spans, wrong-query spans, held-out response prediction, and generator swaps are designed around that finite claim.

\section{Reliability, Trust Regions, and Detachment}
\label{app:reliability}
For selected states, the prompt policy samples $B_{\rm train}=6$ additional intent-preserving textual alternatives $\widetilde p_b$ that are not rendered into videos. The main span is always constructed from the $K-1$ real rollout candidates; these non-decoded probes require only one same-state generator transition forward and no reward-model call. For $\widetilde d_b=\mu(s_t,\widetilde p_b)-\mu(s_t,p_k)$, define
\begin{equation}
e_{b,t}=\frac{\|P_{\lambda,t}\widetilde d_b\|_{F_t}^2}{\|\widetilde d_b\|_{F_t}^2+\epsilon_{\rm num}},\qquad
r_t=\operatorname{clip}\!\left(\frac1{B_{\rm train}}\sum_{b=1}^{B_{\rm train}}e_{b,t},0,1\right),\qquad
\alpha_t=\alpha_{\max}r_t,
\label{eq:app-reliability}
\end{equation}
with $\epsilon_{\rm num}=10^{-12}$.
The score measures support/coverage of the sampled response span, not correctness of the reward direction. The default uses $\alpha_{\max}=0.75$. Evaluation-time mechanism analysis independently samples $B_{\rm eval}=12$ probes on held-out requests; these probes never participate in the training gate.

For a response $d$, the trust operator is
\begin{equation}
\operatorname{TrustClip}_{F_t}(d)=d\,\min\!\left(1,\sqrt{\frac{\tau_{\rm batch}}{d^\top F_t d+\epsilon_{\rm num}}}\right),
\label{eq:trustclip-app}
\end{equation}
where $\tau_{\rm batch}$ is the detached 95th percentile of native-metric response energies $d^\top F_td$ in the current training batch. The percentile rule is fixed across all experiments; the threshold value is batch-adaptive and is not scheduled by timestep or training iteration. Thus in-region responses are unchanged and only unusually large conditioning shifts are rescaled. Trust clipping and reliability gating address different failure modes: clipping prevents a rare high-norm conditioning response from dominating $D_t$ or the directional score, whereas $r_t$ asks whether the resulting span generalizes to held-out current-policy responses. We apply three safeguards before a response enters $D_t$: (i) prompt-level intent filtering rejects alternatives that alter the requested event or introduce unsupported entities; (ii) Eq.~\ref{eq:trustclip-app} limits response magnitude; and (iii) near-collinear columns are stabilized by the trace-normalized ridge term rather than explicitly inverted. The percentile rule is batch-adaptive and is used as a robustifier rather than a separately tuned reward weight; fixed-$\alpha$ and binary-gate controls isolate the subsequent routing logic.

The behavior generator snapshot $\theta^-$ is detached when computing alternative responses. Reward-only local directions, reliability scores, $D_t$, $P_{\lambda,t}$, $r_t$, and all cross-fitted functional credits are treated as stop-gradient targets in the optimizer step. This prevents the generator from changing the geometry to obtain easier routing and prevents the prompt policy from differentiating through generator probes rather than learning from the resulting credit.

\section{Training Algorithm}
\label{app:algorithm}
\begin{algorithm}[H]
\caption{One joint post-training round}
\label{alg:flowcred-app}
\small
\begin{algorithmic}[1]
\Require request $u$; prompt policy $\pi_\phi$; generator $G_\theta$; $K$ candidates; $M$ matched seeds
\Statex \textbf{Hierarchical credit assignment}
\State sample $p_{1:K}$ and decode $x_{k,m}=G_\theta(p_k;z_m)$ with shared $z_{1:M}$
\State evaluate $R_{k,m}$ against the original request $u$; compute $\widehat A^{P,R}_k$ and $\widehat A^G_{k,m}$
\Statex \textbf{Functional coordination at actual rollout states}
\For{each selected actual transition state $s^{k,m}_t$}
\State obtain reward-only local direction $(q_t^R,g_t^R)$ from the base generator updater
\State probe $d_{j\leftarrow k,t}$ for $j\neq k$; cross-fit $d_{j\leftarrow k,t}^{\top}q_t^R$ into $C_j$
\State construct $P_{\lambda,t}$; estimate support $r_t$ from $B_{\rm train}$ independent non-decoded prompt probes
\State route $g_t^G\leftarrow(I-\alpha_{\max}r_tP_{\lambda,t})g_t^R$
\EndFor
\State update prompt policy with $\operatorname{Cal}_{R}(\widehat A^{P,R})+\lambda_F\operatorname{Cal}_{C}(C)$
\State update generator with $F_tg_t^G$ plus the unchanged base regularizer
\end{algorithmic}
\end{algorithm}

\section{Implementation and Evaluation Protocol}
\label{app:implementation}

\subsection{Backbone, prompt optimizer, and training pool}
The primary generator is the public LTX-2.5 trainable DiT checkpoint from the official \texttt{Lightricks/LTX-2.5} release~\citep{hacohen2026ltx2,lightricks2026ltx25}, specifically \path{diffusion_models/ltx-2.5-22b-dev-transformer-bf16.safetensors}, with its native flow parameterization and scheduler. The prompt optimizer is the instruction-tuned \texttt{Qwen/Qwen3.5-9B} checkpoint~\citep{qwen2026qwen35model}, used as a text-generation policy; we do not use the separate \texttt{Qwen3.5-9B-Base} checkpoint. The main configuration uses full-parameter prompt-policy updates. HunyuanVideo-1.5~\citep{wu2025hunyuanvideo15} provides backbone transfer, while RVM~\citep{choi2026rvm} provides an alternative generator reward updater under the same LTX evaluation harness.

The training pool contains 8,000 T2V requests. We build it from 4,800 programmatically instantiated compositional templates over entities, attributes, relations, counts, and actions; 2,000 independently generated motion/interaction requests; and 1,200 manually curated multi-event requests, then stratify the mixture across object/attribute binding, spatial relation, motion binding, action binding, interaction, numeracy, and longer event descriptions. Before training, requests are deduplicated against every benchmark split by normalized exact match, 5-gram MinHash (Jaccard threshold 0.80), and sentence-embedding cosine (threshold 0.92); no retained training request crosses any of these thresholds. Validation prompts are disjoint at the request level and are never used for benchmark reporting. Prompt candidates are constrained to preserve the user request, avoid unsupported entities, and remain within the generator context budget. Candidate validity is checked by a frozen \texttt{Qwen/Qwen3.5-9B} copy that is separate from the trainable prompt-policy weights. The checker receives the original request and one candidate and deterministically returns PASS/FAIL under a fixed rubric: a candidate fails if it changes required entities, counts, relations, actions, or event ordering, or introduces unsupported entities; descriptive camera, lighting, and appearance details are permitted only when they do not alter those constraints. Greedy decoding is used, and the same checker and rubric are applied to all prompt-side baselines and to additional support probes. Rejected candidates are resampled before video generation. For the theory above, $\pi_\phi$ denotes the effective distribution over accepted prompt actions induced by this fixed filtering rule. The exactness claim concerns the hierarchical reward decomposition and its score-function baseline identity; the practical prompt-policy update uses the clipped group-relative surrogate in Table~\ref{tab:implementation}. The same accepted-action sampling rule is used for all prompt-side methods.

\subsection{Default optimization configuration}
\begin{table}[H]
\caption{Default LTX-2.5 configuration used by the primary matched comparison. The rows labeled as central controls are held fixed across the principal ablations.}
\label{tab:implementation}
\centering
\small
\setlength{\tabcolsep}{4.2pt}
\begin{tabularx}{\linewidth}{@{}l l X@{}}
\toprule
Component & Setting & Role in the comparison \\
\midrule
Prompt candidates & $K=4$ & four current-policy actions per request \\
Matched seeds & $M=4$ & same initial latent and stochastic transition stream per seed across candidates \\
Decoded rollouts & 16/request & held fixed across the central chain \\
Generator backend & Flow-GRPO & base reward optimizer \\
Generator trainable set & fixed across generator-training rows & Full changes routing, not which generator parameters are trainable \\
Joint training rounds & 500 & global batch 2; 1,000 request draws from the shared 8k pool \\
Global request batch & 2 & two requests per joint rollout/update round \\
Generator optimizer & AdamW, $5{\times}10^{-6}$ & weight decay $0.01$; gradient clip $1.0$ \\
Prompt-policy optimizer & AdamW, $1{\times}10^{-6}$ & weight decay $0.01$; gradient clip $1.0$ \\
Prompt sampling & $T=0.8$, top-$p=0.95$ & max 256 new tokens; $K=4$ accepted candidates \\
Prompt-policy clip / KL & $0.2$ / $0.01$ & group-relative clipped objective with frozen reference \\
Flow-GRPO clip / KL & $10^{-3}$ / $0.004$ & one inner epoch; reward and reference branches kept separate \\
Mixed precision & bfloat16 & optimizer states in FP32 \\
Routed states & 1/trajectory & uniformly sampled from valid reward-loss transitions \\
Training support probes & $B_{\rm train}=6$ & reliability only; no video decode \\
Candidate scaling & frozen calibration affine scale & Section~\ref{sec:level2}; no current-batch statistics \\
Functional-credit weight & $\lambda_F=0.35$ & combines reward and local prompt credit \\
Max routing strength & $\alpha_{\max}=0.75$ & upper bound after reliability gating \\
Trust clip & 95th percentile & detached native-metric response energy $d^\top F_td$; fixed percentile \\
Ridge coefficient & $\epsilon_{\rm ridge}=10^{-3}$ & $\lambda_t=\epsilon_{\rm ridge}\operatorname{tr}(D_t^\top F_tD_t)/(K-1)$; no schedule \\
Prompt-policy update & full parameter & Qwen3.5-9B policy \\
Evaluation probes & $B_{\rm eval}=12$ & mechanism analysis only; independent of training gate \\
Benchmark seeds & 3/request & matched across methods and averaged per request \\
Central training runs & HC/Full: 3; others: 2 & run is the outer replication unit \\
\bottomrule
\end{tabularx}
\end{table}

If the response Gram trace is numerically negligible, we set the response operator and routing strength to zero rather than inverting an ill-conditioned matrix; the clip on $r_t$ is retained only as a numerical guard.

Qualitative outputs use the native 5-second, 24-fps, 832$\times$480 generation format. We apply no spatial upscaling or frame interpolation to the videos or to the frames shown in the paper.

\subsection{Reward and optimization details}
\label{app:reward-recipe}
Each decoded video is scored against the original request by a fixed public reward stack. We use: (i) the Text-Alignment output of \texttt{KlingTeam/VideoReward}~\citep{liu2025videoreward} for request faithfulness $r_{\rm faith}$; (ii) VQAScore with \texttt{zhiqiulin/clip-flant5-xxl}~\citep{lin2024vqascore}, averaged over eight uniformly spaced frames, for compositional correspondence $r_{\rm comp}$; (iii) \texttt{TIGER-Lab/VideoScore-v1.1}~\citep{he2024videoscore}, using the mean of its temporal-consistency and text-to-video-alignment outputs over the model's native frame sampling, for event realization $r_{\rm event}$; and (iv) the mean of VideoReward visual-quality and motion-quality outputs as the bounded quality stabilizer $r_{\rm qual}$. All model weights, prompts, frame-sampling rules, and preprocessing are frozen before controlled training.

For each component, location and scale are estimated once on a disjoint calibration pool of 2,048 training-only videos and then held fixed throughout all controlled runs:
\begin{equation}
\widetilde r_c=\operatorname{clip}\!\left(\frac{r_c-\mu_c^{\rm cal}}{\sigma_c^{\rm cal}+10^{-6}},-3,3\right),\qquad
R^{\rm opt}=0.35\widetilde r_{\rm faith}+0.30\widetilde r_{\rm comp}+0.25\widetilde r_{\rm event}+0.10\widetilde r_{\rm qual}.
\label{eq:reward-recipe}
\end{equation}

\begin{table}[H]
\caption{Frozen reward-stack configuration and calibration statistics. The calibration pool is disjoint from all controlled training/evaluation requests.}
\label{tab:reward-stack}
\centering
\scriptsize
\setlength{\tabcolsep}{4.0pt}
\begin{tabular}{llllcc}
\toprule
Component & Public checkpoint / signal & Video input & Weight & $\mu_c^{\rm cal}$ & $\sigma_c^{\rm cal}$ \\
\midrule
$r_{\rm faith}$ & \texttt{KlingTeam/VideoReward}, Text Alignment & native sampler & 0.35 & 0.42 & 0.17 \\
$r_{\rm comp}$ & \texttt{zhiqiulin/clip-flant5-xxl}, VQAScore & 8-frame mean & 0.30 & 0.63 & 0.15 \\
$r_{\rm event}$ & \texttt{TIGER-Lab/VideoScore-v1.1}, TC+T2V align & native sampler & 0.25 & 2.71 & 0.39 \\
$r_{\rm qual}$ & \texttt{KlingTeam/VideoReward}, VQ+MQ mean & native sampler & 0.10 & 0.56 & 0.16 \\
\bottomrule
\end{tabular}
\end{table}

Reported VBench, \VBtwo{}, \TtwoVC{}, and StoryEval evaluators never enter $R^{\rm opt}$ or the calibration pool. Because $(\mu_c^{\rm cal},\sigma_c^{\rm cal})$ are independent of the current sampled prompts and trajectories, $R^{\rm opt}(u,x)$ remains a per-trajectory scalar score and Proposition~\ref{prop:reward} applies directly to the transformed objective. The same principle is used for the frozen calibration transform in Section~\ref{sec:level2}: reward-credit and functional-credit scales are calibrated once and frozen rather than recomputed from current candidates.

\paragraph{Scorer-family independence.}
We additionally evaluate the final hierarchical-credit and Full checkpoints with frozen VisionReward~\citep{xu2024visionreward}, which is not used anywhere in training, calibration, model selection, or the four headline benchmarks. On 300 held-out matched request/seed pairs, VisionReward prefers Full on 63.3\% of pairs and its normalized scalar score increases by 0.18 standard deviations on average. This scorer-family check is consistent with the blinded human preference study in Appendix~\ref{app:human} and complements the reward-axis interventions below.

For Flow-GRPO~\citep{liu2025flowgrpo}, Eq.~\ref{eq:hierarchical-estimators} supplies the hierarchy-aware trajectory advantage used by the reward branch. We use the fixed LTX configuration in Table~\ref{tab:implementation}: ratio clip $10^{-3}$, KL coefficient $0.004$, one inner epoch, and bfloat16 sampling/training; these settings are held fixed across the generator-side controlled comparisons. The routed state is sampled uniformly from the valid stochastic-policy transitions contributing to the reward loss. Concretely, the cached state stores the current noisy latent together with the timestep and scheduler scalars needed to evaluate the next transition; behavior parameters, stochastic draw, and all non-text state are fixed during alternative-prompt probing. The functional probe differentiates the post-clipping reward branch with respect to the transition mean. The reference/KL branch is never routed. Prompt-policy GRPO uses Eq.~\ref{eq:pe-final}, with terminal and functional components calibrated separately by the frozen statistics defined in Section~\ref{sec:level2} before weighted combination.

\paragraph{Instantiation with RVM.}
RVM~\citep{choi2026rvm} uses reward-weighted velocity matching together with an optional anchor toward a reference velocity field. In the reported RVM transfer, functional coordination is instantiated in this backend's native velocity-output coordinates. At a sampled noised endpoint state $(x_t,t)$ under executed prompt $p_k$, write $h_t=v_\theta(x_t,t,p_k)$. We obtain the local reward-driven direction from the reward-weighted velocity branch, $q_t^R=-\nabla_{h_t}\mathcal L_{\rm RVM,reward}$, absorbing the backend's scalar timestep weight into $q_t^R$ and using the Euclidean metric in $h_t$ coordinates. Alternative-prompt responses use the identical noised state and time under the detached behavior snapshot,
\begin{equation}
d^{\rm RVM}_{j\leftarrow k,t}=v_{\theta^-}(x_t,t,p_j)-v_{\theta^-}(x_t,t,p_k).
\label{eq:rvm-response}
\end{equation}
The same trust clipping, ridge response operator, and held-out support gate are then applied in velocity-output space. Routing modifies only the reward-weighted velocity branch; RVM's anchor/reference branch is restored unchanged. Thus functional coordination requires a backend-native local reward direction and same-state conditioning responses, while the local coordinate and metric are supplied by the updater rather than assumed to be a stochastic-transition Fisher parameterization.

\paragraph{Instantiation with differentiable reward flow.}
We additionally instantiate functional coordination on a Diffusion-DRF differentiable reward-flow backend~\citep{wang2026diffusiondrf}. Following the public DRF setup, the critic is frozen \texttt{Qwen2.5-VL-7B}; ten uniformly sampled video frames are used for VLM feedback, and gradients are truncated to the final three denoising steps. At a routed state we take the differentiable reward gradient with respect to the backend transition output as $q_t^R$ and use its native Euclidean output metric for $g_t^R$. Alternative-prompt responses are evaluated at the identical state under the detached generator snapshot. The same trust clip, trace-normalized ridge operator, and held-out support gate are then applied before backpropagating the routed reward direction. Critic weights, frame sampling, the three-step backpropagation window, gradient checkpointing, and non-reward stabilization terms are held fixed between hierarchy-only and Full. This experiment tests whether cross-module routing remains useful when the generator already receives dense differentiable feedback rather than a scalar policy-gradient advantage.

\paragraph{Backend applicability.}
Functional coordination requires two backend-local objects: (i) an output coordinate in which changing text conditioning at a fixed state yields a response vector, and (ii) a local improvement covector $q_t^R$ supplied by the generator updater. Flow-GRPO, RVM, and differentiable reward flow provide both directly. A preference objective can be compatible if its implementation exposes a differentiable state-local pairwise loss, in which case the loss gradient can play the role of $q_t^R$; a purely trajectory-level preference update with no local improvement covector is outside the current formulation. Likewise, closed API-only generators are outside our target setting because joint generator post-training itself already assumes access to internal transitions and parameters.

\subsection{Benchmark protocol}
\label{app:evaluation}
\VBclassic{}~\citep{huang2024vbench} and \VBtwo{}~\citep{zheng2025vbench2} use their official aggregate/component definitions. \TtwoVC{}~\citep{sun2025t2vcompbench} uses its official compositional dimensions and reported aggregate. StoryEval~\citep{wang2025storyeval} follows the same official event-completion protocol for every generator updater and backbone. Every checkpoint uses identical resolution, duration, scheduler steps, negative-prompt convention, and three fixed generation seeds per benchmark request. The three scores for a request are averaged first; official benchmark aggregation is then applied to those request-level means. This preserves matched stochasticity while preventing seed count from being treated as sample size. 

We distinguish three sources of uncertainty: training-run variation, within-checkpoint generation randomness, and request sampling. The headline hierarchical-credit-versus-Full comparison uses three independent training runs; the functional-coordination factorial uses the same three-run protocol, and other trainable baselines use two independent runs. Confidence intervals for the headline increment are computed on paired run-level deltas using a Student-$t$ interval; request-level paired bootstrap intervals are retained as a within-run sensitivity analysis. Generation seeds are averaged within each checkpoint and are never treated as independent training replicates. The same checkpoint and seed definitions are used across all controlled comparisons.

\subsection{Baseline implementation}
\label{app:baseline-adaptation}
All trainable baselines draw from the same 8,000-request training pool and use the same number of request draws and decoded videos unless a comparison explicitly targets compute. The frozen official LTX-2.5 prompt enhancer uses the reference LTX-2.5 enhancement path~\citep{lightricks2026ltx25}: \texttt{google/gemma-4-E2B-it} with \texttt{LTX2\_5\_T2V\_DEFAULT\_SYSTEM\_PROMPT}, greedy decoding (\texttt{do\_sample=False}), \texttt{no\_repeat\_ngram\_size=5}, and at most 600 new tokens. Prompt-only updates Qwen3.5-9B while freezing LTX-2.5; the Flow-GRPO generator-only row keeps the original request as its text interface and is included as a single-side reference rather than as the matched joint-training control. Prompt-A-Video~\citep{ji2025promptavideo} and VPO~\citep{cheng2025vpo} use their published prompt-side objectives within the common Qwen3.5-9B prompt optimizer, training pool, LTX-2.5 generator, and evaluation harness. Because their original backbones and training stacks differ, the reported values are controlled common-harness results rather than copied published absolute scores.

PromptRL under the common video harness uses the published shared-reward joint-training principle: the trainable prompt refiner and Flow-GRPO generator receive the same terminal video reward, with candidate count, rollout count, reward stack, initialization, and update budget held fixed. UniGRPO adds group-relative normalization for the text policy and generator branch within the same joint rollout groups, retaining the published unified sparse-terminal-reward idea while keeping our LTX-2.5 backbone and reward stack fixed. We use these rows to compare joint-optimization principles within one harness, not as claims of exact reproduction of the original image-generation systems~\citep{wang2026promptrl,liu2026unigrpo}. Matched Global-LOO then trains the same two modules on the same $K\times M$ grid with the global leave-one-out definition in Appendix~\ref{app:naive-relation}; hierarchical credit changes only the credit hierarchy, and Full adds functional prompt shaping and generator routing.

CAPE-T2V~\citep{jia2026cape} is the closest T2V-specific two-sided system in our comparison set, but it addresses the prompt--caption distribution gap through sequential Anchored-PE training followed by DiT adaptation on a separately constructed 54k video-derived caption set. Its released LTX pipeline targets LTX-2.3, whereas our mechanism-controlled study uses online joint reward post-training on LTX-2.5 with no auxiliary video-caption supervision. We therefore use CAPE-T2V for external positioning rather than treating its public absolute scores as a matched-budget ranking.

\begin{table}[H]
\caption{Positioning of two-sided adaptation systems. The first two rows are controlled LTX-2.5 implementations used numerically in Table~\ref{tab:main}; CAPE-T2V is a heterogeneous external reference and is not mixed into the matched-budget ranking.}
\label{tab:two-sided-positioning}
\centering
\scriptsize
\setlength{\tabcolsep}{3.8pt}
\begin{tabularx}{\linewidth}{@{}l l l l X@{}}
\toprule
System & Prompt side & Generator side & Additional supervision & Training form \\
\midrule
PromptRL & trainable LM refiner & Flow-GRPO & none & online shared-terminal-reward joint update \\
UniGRPO & trainable text policy & Flow-GRPO & none & online group-relative joint policy optimization \\
CAPE-T2V & Anchored PE & DiT fine-tuning & 54k video-derived captions & sequential two-sided conditioning alignment \\
\method{} & Qwen3.5-9B policy & Flow-GRPO & none & online hierarchical credit + functional coordination \\
\bottomrule
\end{tabularx}
\end{table}

Wrong-query controls are matched to the target request by compositional category, raw-prompt length, enhanced-prompt length, candidate count, and hierarchical-credit difficulty bin. Random spans are rank matched after Fisher whitening and sample response norms from the empirical true-span distribution. Uniform shrink chooses its scalar on the development split. 

\subsection{Aggregate benchmark summary}
\begin{table}[H]
\caption{Aggregate benchmark status for the principal LTX-2.5 systems under the unified evaluation harness.}
\label{tab:full-breakdown}
\centering
\scriptsize
\setlength{\tabcolsep}{3.7pt}
\begin{tabular}{llccccc}
\toprule
Benchmark & Metric & Base & Prompt-only & Global-LOO & Hier. credit & Full \\
\midrule
\VBclassic{} & Total & 78.14 & 86.97 & 86.49 & 86.91 & \best{87.56} \\
\VBtwo{} & Overall & 54.52 & 62.84 & 63.41 & 63.92 & \best{64.73} \\
\TtwoVC{} & Overall & 46.52 & 60.28 & 61.47 & 62.63 & \best{64.46} \\
StoryEval & Overall & 37.88 & 54.37 & 55.58 & 56.71 & \best{58.08} \\
\bottomrule
\end{tabular}
\end{table}

\section{Hierarchical Credit Estimator Efficiency}
\label{app:level1-efficiency}
Hierarchical credit makes two testable statistical predictions: prompt-conditioned generator baselines remove irrelevant prompt-to-prompt variation, while matched seeds reduce prompt-policy contrast variance when same-seed rewards are positively correlated. Both effects are measured directly rather than inferred from endpoint quality.

\begin{figure}[H]
\centering
\begin{minipage}[t]{0.46\textwidth}
\centering
\includegraphics[width=\linewidth]{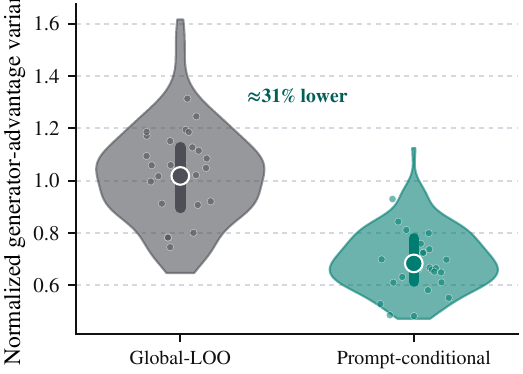}
\subcaption{Generator advantage variance: global leave-one-out versus prompt-conditional residual.}
\end{minipage}\hfill
\begin{minipage}[t]{0.46\textwidth}
\centering
\includegraphics[width=\linewidth]{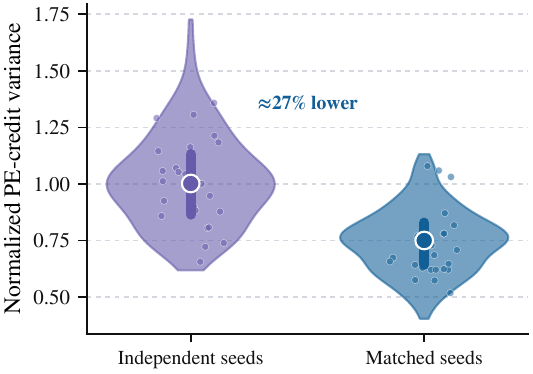}
\subcaption{Prompt candidate-credit variance: independent versus matched seeds at equal video count.}
\end{minipage}
\caption{Estimator-efficiency diagnostics across held-out minibatches: prompt-conditioned generator baselines reduce generator-advantage variance, and matched seeds reduce prompt candidate-credit variance at equal video count.}
\label{fig:level1-efficiency}
\end{figure}

Generator-advantage variance is reduced by about 31\%, and matched seeds reduce prompt-credit variance by about 27\%. The matched-seed ablation keeps total decoded videos fixed and yields 62.63 T2V-CompBench versus 62.21 with independently sampled seeds.

\section{Mechanism Diagnostics}
\label{app:mechanism-extra}

\begin{table}[H]
\caption{Functional-coordination factorial ablation. Mean (SD) across the same three-run protocol used for the HC--Full comparison.}
\label{tab:components-app}
\centering
\small
\setlength{\tabcolsep}{5.0pt}
\begin{tabular}{ccccc}
\toprule
Prompt shaping & Generator routing & VBench$\uparrow$ & StoryEval$\uparrow$ & T2V-CompBench$\uparrow$ \\
\midrule
-- & -- & \msd{86.91}{0.14} & \msd{56.71}{0.27} & \msd{62.63}{0.32} \\
$\checkmark$ & -- & \msd{87.20}{0.15} & \msd{57.36}{0.28} & \msd{63.40}{0.34} \\
-- & $\checkmark$ & \msd{87.29}{0.13} & \msd{57.18}{0.26} & \msd{63.67}{0.31} \\
$\checkmark$ & $\checkmark$ & \textbf{\msd{87.56}{0.12}} & \textbf{\msd{58.08}{0.24}} & \textbf{\msd{64.46}{0.29}} \\
\bottomrule
\end{tabular}
\end{table}

\subsection{Realized-update audit}
\label{app:realized-update}
For cached states, let $\Delta\mu_t^{\rm real}=\mu_{\theta^+}(s_t,p_k)-\mu_\theta(s_t,p_k)$. Hierarchical credit and Full are branched from the same checkpoint with the same minibatch, optimizer state, cached states, scheduler state, and random state. We report
\begin{equation}
\rho_{\rm span}^{\rm real}=\frac{\|P_{\lambda,t}\Delta\mu_t^{\rm real}\|_{F_t}^2}{\|\Delta\mu_t^{\rm real}\|_{F_t}^2+\epsilon},
\end{equation}
together with $\|\Delta\mu_{\rm Full}^{\rm real}\|_F/\|\Delta\mu_{\rm HC}^{\rm real}\|_F$. The realized comparison is 0.487$\rightarrow$0.264 for supported overlap and 0.931 for the update-norm ratio. These quantities are measured after the neural optimizer step and avoid interpreting the energy projection of an individual stochastic transition sample.

\subsection{Prompt-policy shaping and fresh-seed utility}
\label{app:pe-shaping-audit}
The mechanism set contains 360 requests with four candidates each. Scores use the matched $M=4$ training seeds and utility uses eight disjoint fresh seeds; bootstrap intervals resample requests while preserving candidate groups. An 80-request subset with 16 fresh seeds checks ranking stability. Controls include response norm, functional-only, reward-only, cosine-functional, and the default trust-clipped directional score.

\subsection{True, random, wrong-query, and generator-specific geometry}
Wrong-query spans are matched by compositional category, prompt length, candidate count, and hierarchical-credit difficulty. Evaluation uses independent $B_{\rm eval}=12$ alternatives that never enter the training gate. Generator specificity scores identical candidate sets with Base, reward-trained, and Full snapshots and evaluates fresh-seed utility under each rollout generator.

\section{Prompt Integrity and Data-Audit Controls}
\label{app:integrity}
Because Prompt-only already produces large absolute gains, all evaluator inputs remain the original request $u$, and the paper separately audits verbosity, semantic drift, and train/evaluation leakage.

\begin{table}[H]
\caption{Prompt-integrity diagnostics. Hierarchical credit and Full use nearly identical prompt lengths and semantic-filter behavior.}
\label{tab:prompt-integrity}
\centering
\small
\begin{tabular}{lcccc}
\toprule
Diagnostic & Raw request & Prompt-only & HC & Full \\
\midrule
Median tokens & 18 & 111 & 117 & 120 \\
P90 tokens & 31 & 149 & 153 & 157 \\
Intent-filter rejection & -- & 4.3\% & 4.7\% & 4.5\% \\
Blind semantic drift ($N=200$) & -- & 7/200 & 5/200 & 5/200 \\
\bottomrule
\end{tabular}
\end{table}

On a token-length-matched subset, Full still gains +0.61 VBench, +1.22 StoryEval, and +1.59 T2V-CompBench over hierarchical credit. Training/evaluation separation uses normalized-string matching plus lexical and embedding nearest-neighbor retrieval with thresholds fixed before benchmark reporting.

\section{Complete Mechanism and Control Audit}
For compactness in the ablation plots and tables, ``HC'' denotes the hierarchical-credit system without functional coordination.
\label{app:mechanism-audit-full}
\begin{table}[H]
\caption{Compact mechanism and routing audit. Panel A evaluates prompt-candidate scores on disjoint fresh seeds; Panel B tests matched-budget routing alternatives.}
\label{tab:mechanism-audit}
\centering
\scriptsize
\setlength{\tabcolsep}{4.0pt}
\textbf{A. Fresh-seed prompt-candidate ranking}\\[2pt]
\begin{tabular}{lcc}
\toprule
Candidate score & Spearman $\rho$ & Top-1 \\
\midrule
Response norm only & 0.313 [0.251, 0.374] & 50.3\% [46.1,54.4] \\
Functional only & 0.421 [0.369, 0.472] & 55.8\% [51.6,60.0] \\
Reward only & 0.447 [0.398, 0.497] & 59.2\% [55.0,63.2] \\
Reward + cosine functional & 0.569 [0.519, 0.618] & 67.2\% [63.2,71.0] \\
Reward + directional functional & \textbf{0.612 [0.568, 0.655]} & \textbf{71.7\% [67.8,75.4]} \\
\bottomrule
\end{tabular}

\vspace{5pt}
\textbf{B. Matched-budget routing controls}\\[2pt]
\begin{tabular}{lcc}
\toprule
Variant & T2V-CompBench & $\Delta$ vs. HC \\
\midrule
Hierarchical credit & 62.63 & 0.00 \\
Rank-matched random span & 62.55 & -0.08 \\
Matched wrong-query span & 62.69 & +0.06 \\
Uniform gradient shrink & 62.81 & +0.18 \\
No cross-fit & 63.44 & +0.81 \\
No reliability gate & 63.67 & +1.04 \\
Full & \textbf{64.46} & \textbf{+1.83} \\
\bottomrule
\end{tabular}
\end{table}

\section{Generalization and Compute}
\label{app:generalization}
\label{app:compute}
\subsection{Backbone and generator-updater transfer}
The Hunyuan experiment preserves candidate construction, matched estimators, executed-action reference, and allocation logic while substituting backbone-native transition quantities. RVM and Diffusion-DRF comparisons are likewise within-updater: hierarchy-only versus the same system plus functional coordination under an identical StoryEval/T2V-CompBench harness. We therefore claim positive within-setting increments, not cross-optimizer rankings. Under the same VBench-2.0 protocol, the measured raw-prompt Overall scores are 54.79 for HunyuanVideo-1.5 and 54.52 for LTX-2.5.

\begin{table}[H]
\caption{Transfer comparisons across a second backbone and two alternative generator updaters, including differentiable reward flow. Absolute scores are shown within each matched setting rather than used to rank heterogeneous optimizers.}
\label{tab:generalization-full}
\centering
\small
\setlength{\tabcolsep}{6.0pt}
\renewcommand{\arraystretch}{1.08}
\begin{tabular}{@{}lccc@{}}
\toprule
Method / within-setting contrast & VBench & \makecell{T2V-\\CompBench} & StoryEval \\
\midrule
\multicolumn{4}{@{}l}{\textbf{HunyuanVideo-1.5 / Flow-GRPO}}\\[-1pt]
\cmidrule(lr){1-4}
Matched Global-LOO & 84.67 & 55.34 & 55.12 \\
Hier. credit & 84.96 & 55.89 & 56.04 \\
\rowcolor{rowlight}Full & \best{85.53} & \best{57.11} & \best{57.13} \\
\addlinespace[5pt]
\multicolumn{4}{@{}l}{\textbf{LTX-2.5 / RVM}}\\[-1pt]
\cmidrule(lr){1-4}
Hier. credit & 86.39 & 61.28 & 56.18 \\
\rowcolor{rowlight}Full & \best{86.91} & \best{62.57} & \best{57.33} \\
\addlinespace[5pt]
\multicolumn{4}{@{}l}{\textbf{LTX-2.5 / Diffusion-DRF}}\\[-1pt]
\cmidrule(lr){1-4}
Hier. credit & 87.08 & 63.16 & 57.24 \\
\rowcolor{rowlight}Full & \best{87.49} & \best{64.13} & \best{58.03} \\
\bottomrule
\end{tabular}
\end{table}

\subsection{Local-compute accounting}
Full keeps the decoded-video and reward-model-call budget identical to hierarchical credit. Under the default $K=4$ and $B_{\rm train}=6$, it additionally samples 96 accepted support prompts per request (six at the routed state of each of 16 trajectories) and performs 144 single-state generator transition forwards: 48 for the $K-1=3$ rollout-candidate responses and 96 for the support probes. These operations do not decode additional videos or invoke the video reward stack. The same frozen intent checker screens support prompts; extra proposal/checker calls caused by rejection resampling are data-dependent and are therefore not included in the deterministic count. End-to-end hardware cost depends on the generator implementation and prompt-policy sampling stack, so we report deterministic operation counts rather than presenting a universal accelerator-hour ratio.

\begin{table}[H]
\caption{Deterministic per-request operation counts for the primary LTX setting.}
\label{tab:compute-ledger}
\centering
\small
\begin{tabular}{lcc}
\toprule
Quantity & Hierarchical credit & Full \\
\midrule
Decoded videos / request & 16 & 16 \\
Reward-model calls / request & 16 & 16 \\
Accepted support prompts / request & 0 & 96 \\
Same-state transition probe forwards & 0 & 144 \\
\bottomrule
\end{tabular}
\end{table}

\begin{figure}[H]
\centering
\includegraphics[width=0.62\linewidth]{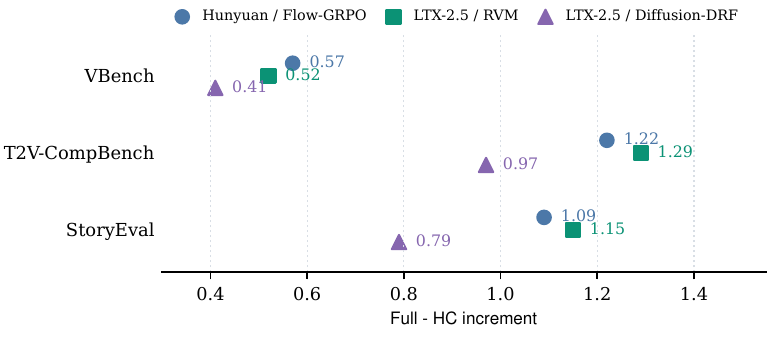}
\caption{Within-setting transfer of the Full$-$hierarchical-credit increment across a second backbone and two alternative generator updaters.}
\label{fig:transfer}
\end{figure}

\section{Robustness and Replication}
\label{app:sensitivity}
\label{app:robustness}
We keep robustness reporting focused on two questions that materially affect the main claim: whether the method remains useful when the decoded rollout budget changes, and whether the hierarchical-credit-to-Full increment repeats across independent training runs. Hyperparameters are fixed before the reported test evaluation; benchmark test scores are not used for selection.

\begin{table}[H]
\caption{Decoded-rollout budget study with $K=4$. Hierarchical credit requires $M\ge2$ because the prompt-conditioned leave-one-seed-out residual is undefined for a single trajectory.}
\label{tab:rollout-budget}
\centering
\small
\setlength{\tabcolsep}{4.6pt}
\begin{tabular}{c c cccc c}
\toprule
$M$ & Videos/request & PromptRL & Global-LOO & Hier. credit & Full & Full$-$HC \\
\midrule
2 & 8  & 59.74 & 60.21 & 61.11 & \textbf{63.18} & +2.07 \\
4 & 16 & 61.06 & 61.47 & 62.63 & \textbf{64.46} & +1.83 \\
8 & 32 & 62.11 & 62.38 & 63.34 & \textbf{64.87} & +1.53 \\
\bottomrule
\end{tabular}
\end{table}

All methods improve with more terminal evidence, while the Full-minus-HC increment remains positive across the three budgets. We treat $M$ as a compute axis rather than a free hyperparameter improvement.

The main hierarchical-credit-versus-Full comparison is repeated for three independent training runs. Run-level deltas, not generation-seed cells, are the outer replication units.

\begin{figure}[H]
\centering
\includegraphics[width=0.67\linewidth]{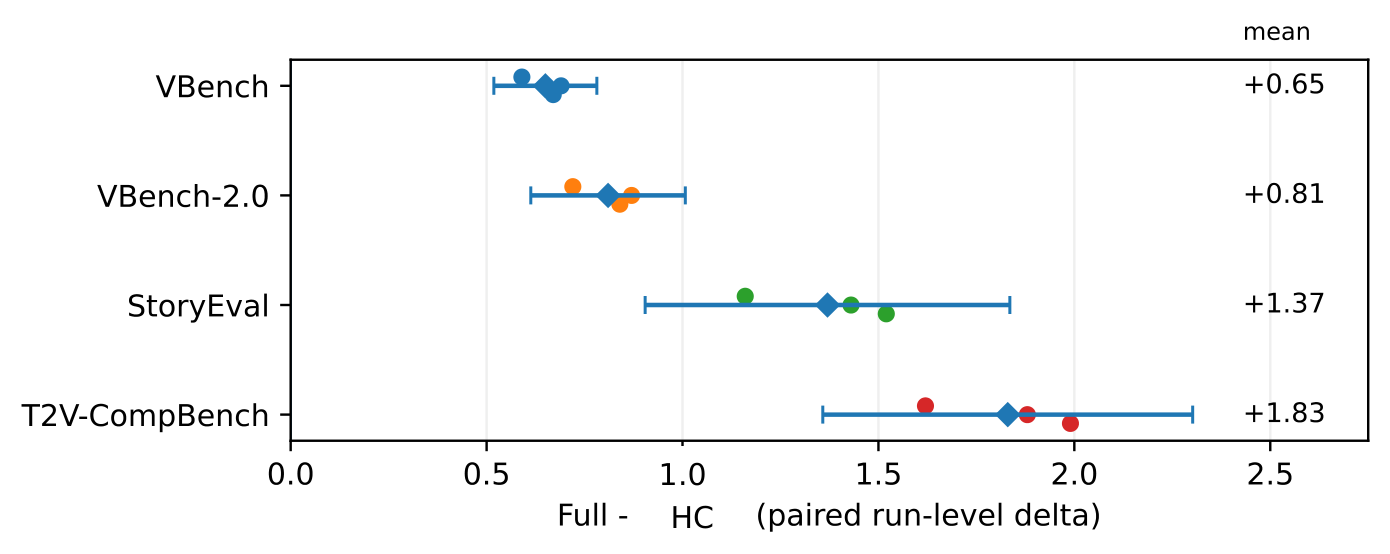}
\caption{Three-run replication. Small points are paired training-run deltas; diamonds show means and horizontal intervals show Student-$t$ 95\% intervals over the three run-level differences.}
\label{fig:three-run}
\end{figure}

\begin{table}[H]
\caption{Raw Full$-$hierarchical-credit deltas for three independent training replicates.}
\label{tab:run-deltas}
\centering
\small
\setlength{\tabcolsep}{6.4pt}
\begin{tabular}{lccc}
\toprule
Metric & Rep. A & Rep. B & Rep. C \\
\midrule
\VBclassic{} & 0.59 & 0.69 & 0.67 \\
\VBtwo{} & 0.72 & 0.87 & 0.84 \\
\TtwoVC{} & 1.62 & 1.88 & 1.99 \\
StoryEval & 1.16 & 1.43 & 1.52 \\
\bottomrule
\end{tabular}
\end{table}

The corresponding means are 0.65 VBench, 0.81 VBench-2.0, 1.83 T2V-CompBench, and 1.37 StoryEval. With only three outer replicates, we use these intervals descriptively and do not turn them into stronger significance claims.

\section{Realized Cross-Module Coordination Audit}
\label{app:realized-coordination}
The response span establishes local conditioning capacity; this audit asks whether the trained modules subsequently move in the complementary pattern predicted by functional coordination. For each held-out routed state, we cache the pre-update behavior generator and estimate the prompt-policy-induced displacement
\[
\Delta\mu_{\rm P}=\mathbb E_{p\sim\pi_{\phi^+}}[\mu_{\theta^-}(s,p)]-\mathbb E_{p\sim\pi_\phi}[\mu_{\theta^-}(s,p)].
\]
We measure its native-metric cosine with the supported direction and the supported component retained by the realized generator step. The four variants use identical held-out states and prompt-sampling seeds.

\begin{table}[H]
\caption{Realized cross-module coordination on held-out states.}
\label{tab:realized-coordination}
\centering
\small
\setlength{\tabcolsep}{5.0pt}
\begin{tabular}{lcc}
\toprule
Training variant & $\cos_F(\Delta\mu_{\rm P},P_\lambda g^R)$ & Generator supported overlap $\downarrow$ \\
\midrule
Hierarchical credit & 0.29 & 0.487 \\
Prompt shaping only & 0.55 & 0.471 \\
Generator routing only & 0.30 & 0.281 \\
Full & \textbf{0.61} & \textbf{0.264} \\
\bottomrule
\end{tabular}
\end{table}

The factorial pattern is consistent with the intended division of labor: prompt shaping mainly changes the upstream displacement, generator routing mainly reduces the downstream component lying in the response span, and Full exhibits both effects. We treat this as a descriptive consistency audit rather than a causal decomposition.
\vspace{-0.45em}

\section{Human Evaluation and Qualitative Cases}
\label{app:human}

\paragraph{Expanded qualitative cases.}
\label{app:qualitative}
Figure~\ref{fig:qualitative-app} adds four benchmark examples chosen to broaden the qualitative coverage beyond the two main-text cases. The panel spans numeracy/composition, multi-person interaction, spatial relation, and a coupled appearance transition; rows again compare Base, VPO, and \method{} using representative temporal frames.

\begin{figure}[H]
\centering
\includegraphics[width=0.91\linewidth]{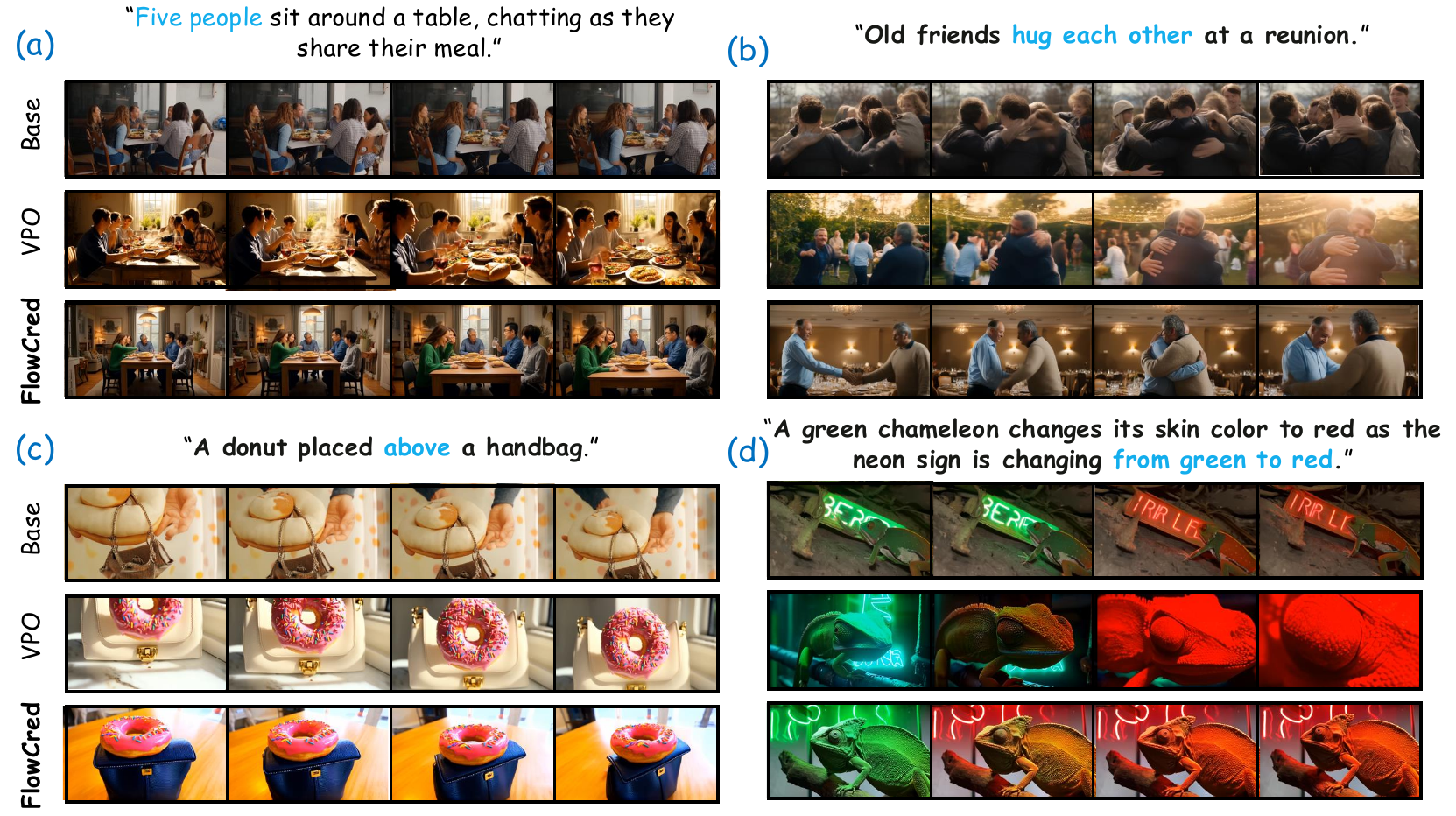}
\caption{\textbf{Additional qualitative comparisons.} Four benchmark cases covering numeracy, interaction, spatial relation, and synchronized appearance change. Rows show Base, VPO, and \method{}.}
\label{fig:qualitative-app}
\end{figure}

\paragraph{(a) Numeracy and composition.}
The key requirement is exact cardinality together with a coherent shared-meal scene. Base over-populates the table, so the generated composition no longer matches the requested five-person configuration. VPO moves in the opposite direction and under-populates the scene. \method{} maintains five visible participants across the representative frames while keeping them organized around one table, preserving both the requested count and the intended conversational meal context rather than satisfying one at the expense of the other.

\paragraph{(b) Protagonist-focused interaction.}
This case tests whether the model can isolate the intended pair and carry the reunion interaction through to a clear embrace. Base disperses the event into a crowded group hug, making the two target friends difficult to identify consistently. VPO produces an embrace but shows weaker protagonist focus across the sequence. \method{} gives the clearest two-person progression from greeting to contact to hug, while maintaining the same pair more consistently through the action.

\paragraph{(c) Spatial relation.}
The decisive constraint is the word \emph{above}, so object presence alone is insufficient. Base and VPO both render a donut and a handbag but fail to maintain the requested relative placement in a stable, unambiguous form. \method{} consistently places the donut above the handbag across the representative frames, turning an otherwise plausible pair of objects into the correct relational composition. This mirrors the larger gains on compositional evaluation, where satisfying relations rather than merely rendering entities is the harder requirement.

\paragraph{(d) Coupled color transition.}
The final case requires a coordinated temporal state change: both the chameleon and the neon sign should progress from green toward red. Base does not complete the requested transformation, leaving the appearance change partial. VPO reaches a red state but does so abruptly, with a less natural visual progression and a conspicuous change in camera behavior. \method{} produces the most gradual and complete transition: the two scene elements evolve coherently toward red while the framing remains substantially more stable, making the requested transformation easier to follow as a continuous event rather than a sudden endpoint switch.

\paragraph{Blinded human evaluation.}
We sample 240 requests stratified across compositional and multi-event categories. Three blinded annotators receive the original request and anonymized matched-seed videos with randomized left/right placement and score faithfulness, event/relation completion, motion/visual quality, and overall preference. Raw request-level proportions retain ties; the two primary significance tests condition on non-tie Full-versus-baseline outcomes and use exact binomial tests with Holm correction. Per-dimension analyses are descriptive, and Fleiss' $\kappa$ reports agreement.

\begin{figure}[H]
\centering
\begin{minipage}[t]{0.47\textwidth}
\centering
\includegraphics[width=\linewidth]{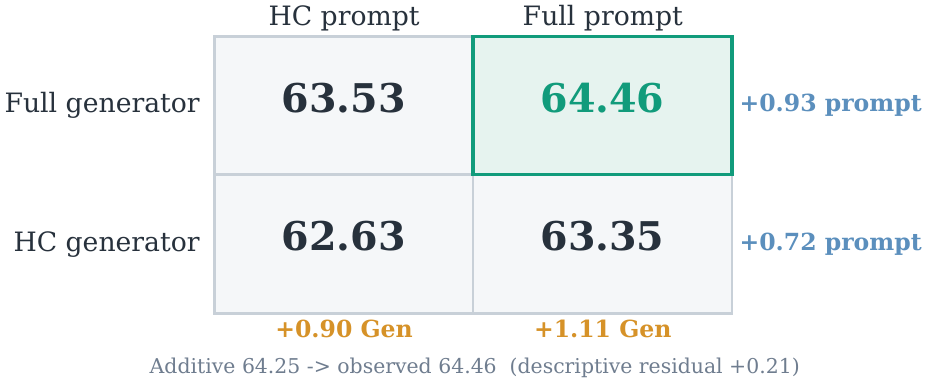}
\subcaption{Frozen-module $2\times2$ cross-swap. Either Full-trained module improves the HC pair when substituted alone; the +0.21 residual is descriptive, not a causal synergy claim.}
\label{fig:cross-swap-app}
\end{minipage}\hfill
\begin{minipage}[t]{0.47\textwidth}
\centering
\includegraphics[width=\linewidth]{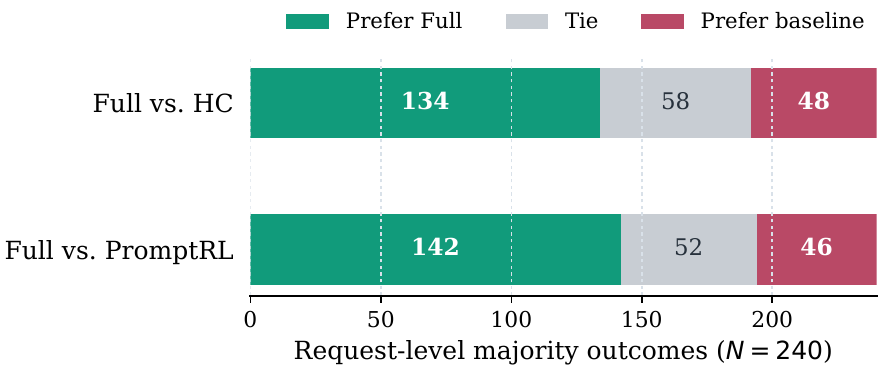}
\subcaption{Blinded request-level human preference with ties retained.}
\label{fig:human-pref-app}
\end{minipage}
\caption{\textbf{External validation beyond automatic benchmarks.} Frozen-module substitution attributes gains to both trained modules, while human judgments independently prefer Full over HC and PromptRL. Per-dimension vote rates are reported in Table~\ref{tab:human-dim}.}
\label{fig:external-validation-app}
\end{figure}

\begin{table}[H]
\caption{\textbf{Blinded matched-seed human evaluation.} Panel A reports request-level majority outcomes and non-tie Full win rates with Wilson 95\% intervals; Panel B reports per-dimension Full preference rates with ties retained.}
\label{tab:human-full}
\centering
\scriptsize
\setlength{\tabcolsep}{2.4pt}
\textbf{A. Primary request-level preference}\\[2pt]
\begin{tabular}{lcccccc}
\toprule
Comparison & Prefer Full & Tie & Prefer baseline & Non-tie Full win & Holm $p$ & Fleiss $\kappa$ \\
\midrule
Full vs. HC & 134 & 58 & 48 & 73.6\% [66.8, 79.5] & $1.34\!\times\!10^{-10}$ & 0.51 \\
Full vs. PromptRL & 142 & 52 & 46 & 75.5\% [68.9, 81.1] & $2.72\!\times\!10^{-12}$ & 0.53 \\
\bottomrule
\end{tabular}

\vspace{5pt}
\textbf{B. Per-dimension Full preference}\\[2pt]
\label{tab:human-dim}
\setlength{\tabcolsep}{4.0pt}
\begin{tabular}{llcc}
\toprule
Comparison & Dimension & Prefer Full & 95\% CI \\
\midrule
\multirow{4}{*}{Full vs. HC}
& Faithfulness & 128/240 (53.3\%) & [47.0, 59.5] \\
& Composition/event & 151/240 (62.9\%) & [56.6, 68.8] \\
& Motion/visual & 122/240 (50.8\%) & [44.5, 57.1] \\
& Overall & 134/240 (55.8\%) & [49.5, 62.0] \\
\midrule
\multirow{4}{*}{Full vs. PromptRL}
& Faithfulness & 132/240 (55.0\%) & [48.7, 61.2] \\
& Composition/event & 158/240 (65.8\%) & [59.6, 71.5] \\
& Motion/visual & 127/240 (52.9\%) & [46.6, 59.1] \\
& Overall & 142/240 (59.2\%) & [52.9, 65.2] \\
\bottomrule
\end{tabular}
\end{table}

\paragraph{Preference pattern.}
The request-level majority analysis favors Full in both primary comparisons. Against HC, Full wins 134 requests versus 48 baseline wins, with 58 ties; conditioning on non-ties gives a 73.6\% Full win rate. Against PromptRL, the corresponding counts are 142 / 52 / 46 and a 75.5\% non-tie win rate. The Holm-corrected exact tests are significant in both comparisons, while Fleiss' $\kappa$ values of 0.51 and 0.53 indicate moderate agreement rather than near-deterministic annotator consensus.

\paragraph{Where human gains concentrate.}
The largest descriptive gap occurs on composition/event completion: Full is preferred on 62.9\% of requests against HC and 65.8\% against PromptRL. Faithfulness improves more modestly (53.3\% and 55.0\%), and motion/visual preference remains closer to parity (50.8\% and 52.9\%). This pattern is consistent with the automatic results, where the largest gains appear on compositional and multi-event benchmarks, while also showing that the preference advantage is not driven solely by a generic visual-quality shift.

\paragraph{Frozen-module attribution.}
The $2\times2$ post-training cross-swap uses HC/Full prompt optimizers $\times$ HC/Full generators and freezes all modules during the audit. Substituting the Full generator into the HC pair contributes +0.90 T2V-CompBench, while substituting the Full prompt optimizer contributes +0.72; with the complementary module already Full-trained, the corresponding increments are +1.11 and +0.93. Starting from 62.63, the HC-conditioned marginal effects predict 64.25 additively, and the Full pair reaches 64.46, leaving only a descriptive +0.21 residual. The audit therefore attributes measurable gains to both trained sides without requiring a strong interaction effect between them.